
\documentclass{article}

\usepackage{microtype}
\usepackage{graphicx}
\usepackage{subcaption}
\usepackage{booktabs} %
\usepackage{enumitem}
\usepackage[dvipsnames]{xcolor}
\usepackage[accepted]{icml2026}

\usepackage{amsmath}
\usepackage{amssymb}
\usepackage{mathtools}
\usepackage{amsthm}

\usepackage{bm}
\usepackage[dvipsnames]{xcolor}
\usepackage{tikz}
\usetikzlibrary{calc}
\usetikzlibrary{arrows.meta}
\definecolor{magenta}{RGB}{255,0,255}
\definecolor{ForestGreen}{cmyk}{0.91,0,0.88,0.12}

\newif\ifshowcomments
\showcommentstrue   %

\ifshowcomments
    \newcommand{\Att}{\ensuremath{\mathsf{A}}}
    \newcommand{\lf}[1]{\textcolor{magenta}{LF: #1}}
    \newcommand{\levmodif}[1]{\textcolor{magenta}{#1}}
    \newcommand{\ml}[1]{\textcolor{purple}{(ML: #1)}}
    \newcommand{\ms}[1]{\textcolor{blue}{(MS: #1)}}
    \colorlet{pierrem}{ForestGreen}
    \newcommand{\pierrecomment}[1]{ \textcolor{pierrem}{(PM: #1)}}
    
    \newcommand{\romu}[1]{ \textcolor{red}{($\rho\mu$: #1)}}
\else
    \newcommand{\Att}{\ensuremath{\mathsf{A}}}
    \renewcommand{\geq}{\geqslant}
    \renewcommand{\leq}{\leqslant}
    \newcommand{\lf}[1]{}
    \newcommand{\levmodif}[1]{}
    \newcommand{\ml}[1]{}
    \newcommand{\ms}[1]{}
    \newcommand{\pierrecomment}[1]{}

    \newcommand{\romu}[1]{\textcolor{red}{}}
\fi

\definecolor{linkcolor}{RGB}{74, 102, 200}
\usepackage[colorlinks=true,allcolors=linkcolor,pageanchor=true,plainpages=false,pdfpagelabels,bookmarks,bookmarksnumbered]{hyperref}

\usepackage[capitalize,noabbrev]{cleveref}

\theoremstyle{plain}
\newtheorem{theorem}{Theorem}
\newtheorem{proposition}{Proposition}
\newtheorem{lemma}{Lemma}

\theoremstyle{definition}
\newtheorem{definition}{Definition}
\newtheorem{assumption}{Assumption}
\theoremstyle{remark}

\icmltitlerunning{Clustering in Deep Stochastic Transformers}

\begin{document}

\twocolumn[
  \icmltitle{Clustering in Deep Stochastic Transformers}

  \icmlsetsymbol{equal}{*}

  \begin{icmlauthorlist}
    \icmlauthor{Lev Fedorov}{nyu}
    \icmlauthor{Michaël E. Sander}{dm}
    \icmlauthor{Romuald Elie}{dm}
    \icmlauthor{Pierre Marion}{inria}
    \icmlauthor{Mathieu Laurière}{nyu}
  \end{icmlauthorlist}

  \icmlaffiliation{nyu}{New York University}
  \icmlaffiliation{dm}{Google DeepMind}
  \icmlaffiliation{inria}{INRIA}

  \icmlcorrespondingauthor{}{lev.fedorov@nyu.edu}
  \icmlkeywords{Machine Learning, ICML}

  \vskip 0.3in
        ]

\printAffiliationsAndNotice{}  %

\begin{abstract}
Transformers have revolutionized deep learning across various domains but understanding the precise token dynamics remains a theoretical challenge.
Existing theories of deep Transformers with layer normalization typically predict that tokens cluster to a single point; however, these results rely on deterministic weight assumptions, which fail to capture the standard initialization scheme in Transformers.
In this work, we show that accounting for the intrinsic stochasticity of random initialization alters this picture.
More precisely, we analyze deep Transformers where noise arises from the random initialization of value matrices.
Under diffusion scaling and token-wise RMS normalization, we prove that, as the number of Transformer layers goes to infinity, the discrete token dynamics converge to an interacting-particle system on the sphere where tokens are driven by a \emph{common} matrix-valued Brownian noise.
In this limit, we show that initialization noise prevents the collapse to a single cluster predicted by deterministic models.
For two tokens, we prove a phase transition governed by the interaction strength and the token dimension: unlike deterministic attention flows, antipodal configurations become attracting with positive probability.
Numerical experiments confirm the predicted transition, reveal that antipodal formations persist for more than two tokens, and demonstrate that suppressing the intrinsic noise degrades accuracy.
\end{abstract}

\section{Introduction}

The Transformer architecture \cite{vaswani2017attention} has revolutionized modern machine learning, establishing itself as the backbone of state-of-the-art models across distinct modalities \cite{devlin2019bertpretrainingdeepbidirectional, dosovitskiy2021imageworth16x16words,moussad2023transformative}.
Despite this empirical dominance, a rigorous understanding of the signal propagation in deep, randomly initialized Transformers remains limited. The attention mechanism \citep{bahdanau2014neural} introduces a global, nonlinear interaction whose asymptotic behavior in deep networks is understood only in fragments.

Among existing theoretical directions, mean-field and infinite depth limits of self-attention dynamics \cite{sander2022sinkformers} have yielded a rich framework, explaining the emergence of clustering \citep{geshkovski2023emergence, criscitiello2024synchronization}, metastability \citep{bruno2024emergence}, and geometric structures that mirror empirical behavior \citep{castin2025unified, burger2025analysis}. However, these works typically rely on idealized structural assumptions: they consider weights that are tied across the layers, or equivalently assume a fully deterministic layer dynamics, 
thereby neglecting the layer-wise randomness inherent to initialization.

In parallel, several theoretical studies have incorporated noise into attention models \cite{shalova2024solutions, balasubramanian2025structure}.%
 While these works capture qualitative phenomena such as phase transitions and metastability of token configurations, the injected noise is typically external and \emph{idiosyncratic}, acting independently at the token level. Consequently, these models lack a self-contained derivation from the forward pass of implementable Transformer architectures and do not capture the \emph{common-noise} structure induced by layer-wise random initialization.

A third relevant direction is diffusion limits for residual networks \cite{pmlr-v108-peluchetti20a,peluchetti2021doubly,cont2023asymptoticanalysisdeepresidual}. These models capture the intrinsic stochasticity emerging from depth and random initialization, but cannot express the global token mixing of attention or incorporate practical normalization schemes like RMSNorm, which plays a crucial role in preventing explosion and fundamentally alters token dynamics \cite{karagodin2024clustering, karagodin2025normalization}. 

We therefore lack a theoretical framework for Transformers 
that incorporates \emph{intrinsic} stochasticity, that is noise arising directly from standard random initialization. Understanding this intrinsic noise is essential, as it corresponds to initialization schemes used in real-world applications.
In this paper, we analyze a simplified self-attention–only Transformer in which randomness enters exclusively through the value matrices at initialization. Under diffusion scaling with RMS normalization, we prove that the discrete token dynamics converge to a system of stochastic differential equations (SDEs) constrained to the unit sphere.
Building on this continuous-time limit, we show that intrinsic noise qualitatively alters token clustering: the interplay between attention temperature and token dimension leads to phase transitions that cannot occur in deterministic attention flows. 
These transitions offer a new perspective on rank collapse phenomena: While noise still drives tokens toward low-rank configurations, it simultaneously diversifies the possible terminal states, enabling pairwise separation. 
More precisely, we make the following contributions:
\begin{itemize}%
   \setlength{\parskip}{1pt}
    \item \textbf{Architecturally grounded stochastic limit (Thm.~\ref{thm:ctl_transformer}).} 
    We prove convergence in distribution of deep randomly initialized Transformers--where noise emerges from random initialization of Value matrices--to a system of SDEs on the sphere. 
    Importantly, the limiting system is invariant to the initialization distribution, as long as it is centered with fixed variance.
    \item \textbf{Explicit phase transition boundaries for $2$ tokens (Thm.~\ref{th:phase_transition}).} 
    We show that tokens can either converge to a single point or separate into two antipodal points on the sphere—and these are the only possible outcomes. 
    The system can be in one of two regimes, and we derive explicit boundaries for this phase transition in terms of token dimension and the inverse softmax temperature. 
    \item \textbf{Noise-induced accessibility of deterministic zero-probability states (Prop.~\ref{lem:hybrid_clustering}).} We introduce noise directly into the deterministic dynamics studied in prior works, establishing a sharp phase transition at an explicit threshold: Below it, deterministic clustering to a single point persists; above it, noise enables tokens to diverge into antipodal configurations.  %
    \item \textbf{Numerical validation and robustness beyond the analytically tractable regime (Section \ref{sec:numerics}).} We show that random initialization of value matrices leads to an improvement in accuracy on CIFAR-10 compared to constant weight initializations. 
    We present numerical illustrations of predicted phase transitions and clustering behavior. We show that antipodal configurations persist beyond the two-tokens setting and probe consistency across varying time horizons.
    Rigorous convergence rates and theoretical guarantees beyond the two-token case remain open directions for future work.  %
\end{itemize}

\noindent\textbf{Notations.}
Let $\bm{X}=(X^1,\dots,X^N)$ collect tokens $X^i\in\mathbb{R}^d$ (column vectors). $P_X^{\perp}[Y]=Y-\langle X,Y\rangle X$ projects onto the tangent space of $\mathbb{S}^{d-1}$ at $X$. We denote convergence in distribution by $\xrightarrow{d}$, the space of continuous functions by $C(X;Y)$, and a standard matrix-valued Brownian motion by $W_t\in\mathbb{R}^{d\times d}$. Finally, $\|\cdot\|$ is the Euclidean norm, $\mathcal{N}(0,1)$ the standard normal, $\mathbb{N}_0$ the non-negative integers, and $\operatorname{Id}(d)$ the identity matrix.

\section{Background and related work}

We briefly review the theoretical frameworks that treat deep networks as continuous-time dynamical systems. These scaling limits provide the mathematical foundation for analyzing signal propagation in deep Transformers.

\subsection{Residual Neural Networks}

Residual networks (ResNets) \cite{he2016deep} can be conveniently interpreted as discretizations of differential equations \cite{chen2018neural}. 
Let $X_n \in \mathbb{R}^d$ denote the hidden representation at layer $n$. %
Consider the layer-wise update
\begin{equation}
    X_{n+1} = X_n + \alpha_L \, V_n f(X_n), \quad n=0,\ldots,L-1,
    \label{eq:resnet-update}
\end{equation}
where $\alpha_L$ is a depth-dependent scaling factor and $V_n \in \mathbb{R}^{d\times d}$ denotes the layer weight matrix, typically initialized randomly before training (e.g., with independent standard normal entries). %
As the depth $L \to \infty$, the choice of $\alpha_L$ determines the nature of the limiting dynamics, as summarized in Table~\ref{tab:scaling-regimes}.

\begin{table}[h]
  \caption{Scaling regimes and their continuous limits. Here, $W_t \in \mathbb{R}^{d \times d}$ is a matrix whose entries are independent scalar Brownian motions.}
  \label{tab:scaling-regimes}
  \begin{center}
    \begin{small}
      \begin{sc}
        \begin{tabular}{lll}
          \toprule
          $\alpha_L$  & Initialization & Continuous Limit \\
          \midrule
          $1/L$ & $V_n = \operatorname{Id}(d)$  & ODE: $dX_t =  f(X_t) dt$ \\
          $1/\sqrt{L}$ & $V_n^{ij} \stackrel{\text{i.i.d.}}{\sim} \mathcal{N}(0,1)$ & SDE: $dX^\top_t = f(X_t)^\top dW_t$ \\
          \bottomrule
        \end{tabular}
      \end{sc}
    \end{small}
  \end{center}
\end{table}

While the ODE limit ($\alpha_L = 1/L$) describes the flow of infinite-depth networks with near-identity weights (a.k.a. Neural ODE \citep{chen2018neural}), the SDE limit ($\alpha_L= 1/\sqrt{L}$) captures the intrinsic stochasticity arising from random initialization \citep{yang2017mean, cont2023asymptoticanalysisdeepresidual, marion2024scalingresnetslargedepthregime}. This ``diffusion regime" is critical for understanding trainability and gradient stability in deep networks \cite{pmlr-v108-peluchetti20a, marion2024scalingresnetslargedepthregime}. %
Many works have studied signal propagation in deep residual networks, which include in particular Transformers, e.g.~\citet{zhang2019fixup,bachlechner2021rezero,hayou2021stable,yang2024tensor,chizat2025hidden, dong2025attention}. See also references in \citet{he2024simplifying}.
In this work, we study more specifically this stochastic perspective for Transformers, where the analysis is complicated by the global coupling between tokens.

\subsection{Transformers and Self-Attention}
\label{sec:transfo-SA}
The Transformer architecture \cite{vaswani2017attention} consists of stacked layers combining
self-attention mechanisms, feedforward blocks, normalization, and residual connections.
The defining component is the attention layer, which updates each \emph{token}
(i.e., a vector-valued representation of an element in the input sequence)
by aggregating information from the entire sequence.

At an abstract level, the attention mechanism associates to each token $X^i$
a weighted linear combination of the token representations, $$
\Att_\beta(X^i,\bm{X};\Theta)
\;=\;
\sum_{j=1}^N w_{ij}(X^i,\bm{X};\Theta)\,X^j,$$
where the weights $w_{ij}$ depend on pairwise interactions between $X^i$ and $X^j$
and are parameterized by $\Theta$.
The precise functional form of these weights is architecture-dependent
and will be specified below.

The output of the attention block (see Fig.~\ref{fig:layer_schematic}) is then obtained by applying
a linear transformation through the Value matrix $V\in\mathbb{R}^{d\times d}$,
so that the full update takes the form
\[
X^i \;\longmapsto\; V\,\Att_\beta(X^i,\bm{X};\Theta).
\]

\noindent\textbf{Softmax Self-Attention (SA).}
The standard attention mechanism introduced by \citet{vaswani2017attention}
normalizes exponential interactions via a softmax:
\begin{equation}
\Att^{\mathrm{S}}_{\beta}(X^i,\bm{X})
:= \frac{\sum_{j=1}^N \exp\!\big(\beta\,\langle QX^i,\,KX^j\rangle\big)\,X^j}
{\sum_{j=1}^N \exp\!\big(\beta\,\langle QX^i,\,KX^j\rangle\big)}, 
\label{eq:softmax_attention}
\end{equation}
where $Q$ and $K$ are respectively the query and key matrices and $\beta$ is an inverse temperature parameter.
This formulation constitutes the backbone of modern Transformer architectures
and is the primary object of interest from a modeling standpoint.

\noindent\textbf{Unnormalized Self-Attention (USA).}
Alongside SA, we consider its unnormalized counterpart,
\begin{equation}
\Att^{\mathrm{U}}_{\beta}(X^i,\bm{X})
:= \tfrac{1}{N}\sum_{j=1}^N
\exp\!\left(\beta \left\langle Q X^i, K X^{j}\right\rangle\right) X^j,
\label{eq:USA}
\end{equation}
which has been widely used as a theoretical proxy in the analysis of attention dynamics
\citep{sander2022sinkformers, geshkovski2023emergence}.
USA preserves the same exponential interaction geometry as SA,
while removing the normalization that couples tokens through the denominator.

Throughout the paper, statements formulated for $\Att_\beta \in \{\Att^{\mathrm{S}}_\beta, \Att^{\mathrm{U}}_\beta\}$ %
apply to both $\Att^{\mathrm{S}}_\beta$ and $\Att^{\mathrm{U}}_\beta$.

\subsection{Related theoretical perspectives}
A growing body of work has investigated deep Transformer dynamics
through simplified yet mathematically tractable models.
Three %
mainstream theoretical approaches have emerged as particularly relevant
to the present work.

\noindent\textbf{Mean-Field Transformers.}
The first approach studies Transformers in a mean-field regime,
taking both the network depth and the number of tokens to infinity
\citep{sander2022sinkformers, vuckovic2021regularity}.
In this setting, attention dynamics can be expressed in terms of an empirical measure over tokens, leading to deterministic limiting equations that describe smoothness \cite{castin2023smooth}, universality \cite{furuyatransformers}, token sample complexity \citep{bohbot2025tokensamplecomplexityattention}, clustering \cite{geshkovski2023emergence}, and metastability
\citep{geshkovski2024dynamic, bruno2024emergence, castin2025unified}.
A key feature of this framework is that attention parameters
are typically assumed to be fixed, with the Value matrix treated deterministically.
As a result, stochasticity arising from layer-wise random initialization is not captured.

\noindent\textbf{Stochastic Transformers.}
A second line of work introduces stochasticity directly at the level of token dynamics,
often by modeling attention-induced interactions through Langevin-type equations
\citep{shalova2024solutions, balasubramanian2025structure}.
In these models, each token is driven by an independent Brownian motion,
leading to McKean--Vlasov limits and associated Fokker--Planck equations.
While this approach captures important phenomena such as phase transitions,
the injected noise is typically \emph{idiosyncratic} and externally imposed,
with no direct derivation from the discrete Transformer architecture.

\noindent\textbf{Signal propagation.} 
Another perspective on Transformers comes from the signal propagation literature, which aims at finding parameterizations for which the forward and backward passes are well-conditioned at initialization.
\citet{noci2022signal} show that the $\alpha_L = 1/\sqrt{L}$ scaling factor in front of the residual connection for attention reduces the magnitude of clustering in the attention layers (which they refer to as \textit{rank collapse}). However, they still empirically observe partial clustering even with the $1/\sqrt{L}$ scaling (see Fig.~4 in their paper). Follow-up works propose Shaped Attention \citep{noci2023shaped,he2024simplifying}, which consists in an affine transform of the attention matrix, and obtains competitive results with respect to standard attention. %
Our work provides theoretical grounding on the rank collapse/clustering phenomenon at initialization with $1/\sqrt{L}$ scaling and i.i.d.~Value matrices.%

\noindent\textbf{Position of our work.}
In contrast, our paper studies theoretically and numerically stochasticity that arises
\emph{intrinsically} from standard random initialization of deep Transformers.
Randomness enters exclusively through the Value matrices,
resulting in a diffusion limit driven by \emph{common noise} for all tokens.
This modeling choice is closed to practical deep Transformers implementations \cite{touvron2023llama, mistral_large_2025},
where stochasticity arises from layer-wise random initialization rather than from externally injected noise.
Therefore, the global coupling induced by attention is preserved, leading to stochastic dynamics that are fundamentally distinct from models with token-wise noise.

\section{Deep Stochastic Transformers}

In this section, we describe the precise Transformer model that underlies our
analysis. We work with a minimal yet expressive architecture built from three
core components:
\emph{(i)} self-attention,
\emph{(ii)} residual connections with depth-dependent scaling,
and \emph{(iii)} token-wise RMS normalization.
Position-wise feedforward blocks are deliberately omitted, as they act
independently on each token and therefore do not contribute to inter-token
interactions; this allows us to isolate the core mechanism of attention. %

\subsection{Attention Mechanism}\label{subsec:attention}

As detailed in Section~\ref{sec:transfo-SA}, we focus on Softmax Self-Attention (SA), defined in
\eqref{eq:softmax_attention}, and its unnormalized proxy (USA), defined in
\eqref{eq:USA}.
We recall that throughout the rest of the paper, $\Att_\beta$ refers to either formulation unless specified. Importantly, we treat the Key ($K$) and Query ($Q$) matrices as fixed parameters absorbed into the attention function. While standard initialization involves randomness in all weight matrices, freezing $K$ and $Q$ allows the tractable derivation of the continuous-time limit in Section \ref{sec:CTL}. As our results demonstrate, the intrinsic noise arising solely from $V$ is sufficient to break convergence of tokens to a single cluster. We leave the analysis of the fully stochastic setting for future research.

\subsection{Residual Connections and Scaling}\label{subsec:residual_connection}

We adopt the \emph{diffusion scaling} $\alpha_L \sim 1/\sqrt{L}$ in the residual
update \eqref{eq:resnet-update}, in direct analogy with the scaling regimes introduced in Table \ref{tab:scaling-regimes} for stochastic deep ResNets, leading to nontrivial stochastic limits as
$L \to \infty$.
\subsection{Token-Wise RMS Normalization}\label{subsec:rms_normalization}
This operation constrains each token to the unit sphere, preventing
norm explosion and imposing a geometric structure on the dynamics.  RMS normalization
(\emph{post-layer normalization}) is applied to every token, independently: $\operatorname{RMSNorm}(X^i) = X^i/\|X^i\|.$ This normalization projects the attention
field onto the tangent space of the sphere, thereby modifying the induced flow (see Fig.~\ref{fig:proj})
while preserving inter-token interactions. An analysis of normalization-induced geometric effects, with
comparisons between different normalization schemes, is provided by
\citet{karagodin2025normalization}.
$\operatorname{RMSNorm}$ is also a standard component of modern LLMs,
including LLaMA \cite{touvron2023llama} and Mistral \cite{mistral_large_2025}.

\begin{figure}[tbp]%
    \centering
    \includegraphics[width=1\linewidth]{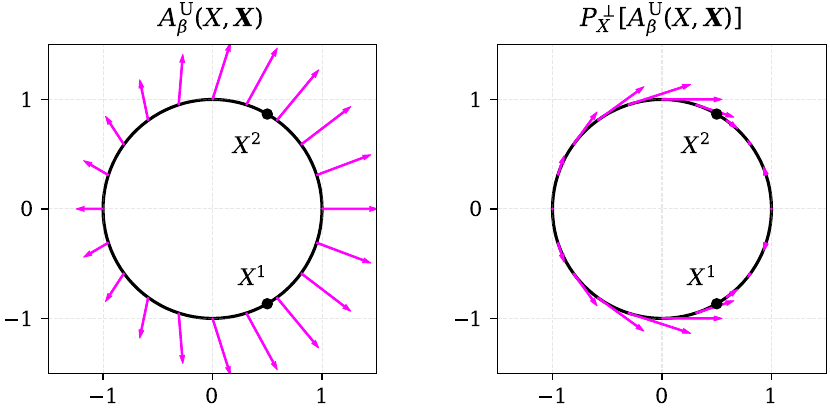}
    \caption{Effect of RMS normalization on attention field. Left: unnormalized $A^{\mathrm{U}}_\beta(X, \boldsymbol{X})$. Right: tangent projection $P^{\perp}_X[A^{\mathrm{U}}_\beta(X, \boldsymbol{X})]$ constrains flow to sphere. Here  $\bm{X}=(X, X^1, X^2)$ %
    and $\beta=1$, $Q=K=\operatorname{Id}(2)$.}
    \label{fig:proj}
\end{figure}

\subsection{Deep Stochastic Transformer Model}

The complete discrete-time model studied in this work integrates the three architectural elements introduced above:
Attention mechanism $\Att_\beta$ (Section~\ref{subsec:attention}),
     Diffusion-scaled residual connection (Section~\ref{subsec:residual_connection}),
     Token-wise RMS normalization (Section~\ref{subsec:rms_normalization}).
Arranged as shown in Fig.~\ref{fig:layer_schematic}, %
\begin{equation}\label{eq:discrete_time}
    X^i_{n+1}
    =
    \frac{
        X^i_n +\tfrac{1}{\sqrt{L}} V_{n+1}\,\Att_{\beta}(X^i_n,\bm{X}_n)
    }{
        \left\| X^i_n + \tfrac{1}{\sqrt{L}}V_{n+1}\,\Att_{\beta}(X^i_n,\bm{X}_n) \right\|
    },
     n \in \mathbb{N}_0,
\end{equation}

where $\bm{X}_n = (X^1_n, \dots, X^N_n)\in (\mathbb{R}^d)^N,$ $\Att_\beta \in \{\Att_\beta^{\mathrm{S}}, \Att_\beta^{\mathrm{U}}\}$ and $\{V_n\}_{n\geqslant0}\subset \mathbb{R}^{d\times d}$ is a sequence of Value matrices, such that:
\begin{assumption}\label{ass:bounded_noise}
The matrices $\{V_n\}_{n\geqslant1} \subset \mathbb{R}^{d\times d}$ are i.i.d.\ with
independent entries $\{v_n^{kl}\}_{k,l=1}^d$ satisfying
\[
    \mathbb{E}[v_n^{kl}] = 0,
    \qquad
    \mathbb{E}\big[(v_n^{kl})^2\big] = \sigma^2,
    \qquad
    |v_n^{kl}| \le M \quad \text{a.s.}
\]
for some constant $M>0$ independent of $n,k,l$.
\end{assumption}
Note that this assumption is satisfied by standard uniform and truncated normal initializations \cite{glorot2010understanding, he2015delving}.

\section{Continuous Time Limit}
\label{sec:CTL}

In this section, we describe the continuous-time limit of Deep Stochastic Transformers.
The main technical part of our derivation comes from the nonlinearity introduced by RMS normalization.

\subsection{Continuous-time limit of stochastic transformers} %
To pass to continuous time, we use the standard linear interpolation for every token trajectory $X^i_{\cdot}$ and step $n\in\mathbb{N}$:
\begin{equation}\label{eq:linear_interpolation}
    \begin{cases}
        \Delta X_n^i \vcentcolon= X_{n+1}^i - X_n^i,\quad i\in\{1, \ldots, N\}, 
        \\
        X^i_L(t)\vcentcolon= X^i_n + \frac{t - n/L}{L} \Delta X^i_n, \text{ if } \tfrac{n}{L}\le t<\tfrac{n+1}{L}.
    \end{cases}
\end{equation}

This yields the continuous-time process
\[
\bm{X}_L(t) \;=\; \bigl(X^1_L(t),\ldots, X^N_L(t)\bigr)
\in C\bigl(\mathbb{R}_+; (\mathbb{R}^d)^N\bigr).
\]

We can now state the following diffusion limit theorem for deep normalized Transformers.
\begin{theorem}[Continuous-time limit of normalized attention]
\label{thm:ctl_transformer}
Suppose that Assumption~\ref{ass:bounded_noise} holds.
Let $\bm{Y}=(Y^1,\ldots,Y^N)$ be the solution to the \textbf{Transformer SDEs},
that is, the system of stochastic differential equations driven by a
\emph{common} matrix-valued Brownian motion
$W_t\in\mathbb{R}^{d\times d}$:
\begin{multline}\label{eq:transformer_sde}
    dY^{i}_t
    =
    \frac{(1-d)\sigma^{2}}{2}\,
    \|\Att_{\beta}(Y^{i}_t, \bm{Y}_t)\|^{2}\, Y^{i}_t\,dt
    \;+\\+
    \sigma\,P^{\perp}_{Y^{i}_t}\!\left[dW_t\,
    \Att_{\beta}(Y^{i}_t, \bm{Y}_t)\right],
    \qquad i=1,\ldots,N,
\end{multline}
with initial condition $Y^i_0=X^i_0.$
Then the linearly interpolated discrete dynamics $\bm{X}_L$ defined in~\eqref{eq:linear_interpolation}
converge in distribution\footnote{Convergence in distribution is defined in App.~\ref{app:A1}, see \eqref{eq:def_of_conv_in_distribution}.} to $\bm{Y}$ as $L\to\infty$, %
 namely,
\(
    \bm{X}_L \xrightarrow{d} \bm{Y}.
\)
The latter convergence holds for $\bm{Y}$ and $\bm{X}_L$
viewed as random elements of
$C(\mathbb{R}_+;(\mathbb{R}^d)^N)$.
\end{theorem}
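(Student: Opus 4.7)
The plan is a standard diffusion-approximation argument: expand the one-step increment $\Delta X^i_n := X^i_{n+1} - X^i_n$ to second order in the perturbation $\epsilon^i_n := \tfrac{1}{\sqrt{L}}\,V_{n+1}\,\Att_\beta(X^i_n,\bm{X}_n)$, split it into a predictable drift plus a martingale increment plus a negligible remainder, and then apply a functional invariance principle for semimartingales. The RMSNorm nonlinearity is the only non-standard feature: it pins each token to $\mathbb{S}^{d-1}$, curves the trajectory, and produces an Itô correction whose coefficient turns out to be exactly $(1-d)\sigma^2/2$, matching the drift in \eqref{eq:transformer_sde}.

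Since $\|X^i_n\|=1$ for $n\ge 1$, expanding $(X^i_n+\epsilon^i_n)/\|X^i_n+\epsilon^i_n\|$ up to order $\|\epsilon^i_n\|^2$ yields
\begin{equation*}
\Delta X^i_n = P^\perp_{X^i_n}[\epsilon^i_n] + \tfrac{3}{2} X^i_n \langle X^i_n, \epsilon^i_n\rangle^2 - \tfrac{1}{2} X^i_n \|\epsilon^i_n\|^2 - \epsilon^i_n \langle X^i_n, \epsilon^i_n\rangle + R^i_n,
\end{equation*}
with $R^i_n = O(\|\epsilon^i_n\|^3)$. Writing $A^i_n := \Att_\beta(X^i_n, \bm{X}_n)$ and using Assumption~\ref{ass:bounded_noise} (in particular $\mathbb{E}[(V_{n+1})_{km}(V_{n+1})_{k'm'}] = \sigma^2\delta_{kk'}\delta_{mm'}$), direct moment computations give $\mathbb{E}[\epsilon^i_n\mid\mathcal{F}_n] = 0$, $\mathbb{E}[\|\epsilon^i_n\|^2\mid\mathcal{F}_n] = d\sigma^2 \|A^i_n\|^2/L$, $\mathbb{E}[\langle X^i_n,\epsilon^i_n\rangle^2\mid\mathcal{F}_n] = \sigma^2\|A^i_n\|^2/L$, and $\mathbb{E}[\epsilon^i_n\langle X^i_n,\epsilon^i_n\rangle\mid\mathcal{F}_n] = \sigma^2\|A^i_n\|^2 X^i_n/L$. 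Collecting these, the predictable drift per step is $\tfrac{(1-d)\sigma^2\|A^i_n\|^2}{2L}\,X^i_n + O(L^{-3/2})$, which matches \eqref{eq:transformer_sde} after scaling $dt = 1/L$. Moreover, the leading martingale increment $P^\perp_{X^i_n}[\epsilon^i_n]$ has conditional cross-token covariation $\tfrac{\sigma^2\langle A^i_n, A^j_n\rangle}{L}\,P^\perp_{X^i_n}P^\perp_{X^j_n}$, which equals the quadratic covariation of $\sigma P^\perp_{Y^i}[dW\,A^i]$ and $\sigma P^\perp_{Y^j}[dW\,A^j]$ per unit time.

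To pass to the limit while keeping track of the common Brownian driver, I would introduce $W^L_t := \tfrac{1}{\sqrt{L}}\sum_{n=1}^{\lfloor Lt\rfloor} V_n$ and invoke Donsker's theorem entrywise to obtain $W^L \xrightarrow{d} \sigma W$ in $C(\mathbb{R}_+;\mathbb{R}^{d\times d})$. Rewriting the martingale part of $\Delta X^i_n$ as an increment of a stochastic integral against $W^L$, the Kurtz--Protter theorem on stability of stochastic integrals, combined with Riemann-sum convergence of the drift and tightness of $\{\bm{X}_L\}_L$ (immediate from RMSNorm plus an Aldous-type modulus estimate on the martingale part), yields joint weak convergence of $(\bm{X}_L, W^L)$ to a weak solution of \eqref{eq:transformer_sde} driven by $\sigma W$. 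The coefficients of \eqref{eq:transformer_sde} are Lipschitz on $(\mathbb{S}^{d-1})^N$, so the SDE admits a unique strong solution and every subsequential limit must coincide with $\bm{Y}$. The main obstacle is precisely this common-noise structure: a single random matrix $V_{n+1}$ couples the updates of all $N$ tokens, so the discrete martingales cannot be asymptotically decoupled and a tokenwise scalar martingale CLT is insufficient; exhibiting the shared driver $W^L$ and invoking Kurtz--Protter is the cleanest way around this. A secondary, purely technical point is the uniform control of the cubic remainder $R^i_n$, which follows from the almost-sure bound $\|\epsilon^i_n\|\le C/\sqrt{L}$ granted by the boundedness of $V_n$ in Assumption~\ref{ass:bounded_noise} together with the boundedness of $A^i_n$ on $(\mathbb{S}^{d-1})^N$, making the summed remainder $o(1)$ in probability uniformly on compacts.
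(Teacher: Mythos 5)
Your proposal is correct, and it reaches the same limiting SDE through a genuinely different argument than the paper's. The Taylor expansion of the normalization map, the conditional moment computations (yielding the $(1-d)\sigma^2/2$ drift coefficient and the cross-token covariation $\tfrac{\sigma^2\langle A^i,A^j\rangle}{L}P^\perp_{X^i}P^\perp_{X^j}$), and the $O(L^{-3/2})$ control of the cubic remainder all match what the paper does in its Step~1. Where you diverge is in the mechanism for passing to the limit: the paper introduces an explicitly truncated dynamics, proves $L^2$-closeness to the original via a discrete Gr\"onwall argument, then invokes Watanabe's convergence-of-martingale-problems theorem to identify the limit generator, and finally uses Kurtz's equivalence between martingale problems and weak solutions together with a Slutsky argument. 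You instead construct the common driver directly as $W^L_t=\tfrac{1}{\sqrt{L}}\sum_{n\le Lt}V_n$, apply Donsker entrywise, rewrite the martingale part of the increments as stochastic integrals against $W^L$, and invoke Kurtz--Protter stability of stochastic integrals plus Lipschitz uniqueness of the SDE on the sphere. Your route has the advantage of making the common noise structure completely explicit as a process-level limit rather than leaving it implicit in the covariance matrix of the generator, which aligns nicely with the "common Brownian motion" emphasis of the theorem statement; the paper's martingale-problem route is perhaps cleaner in that it needs only conditional-moment identities and never has to verify a uniformly-controlled-variations / good-sequence condition for $W^L$. One small technical point you gloss over: the coefficients of \eqref{eq:transformer_sde} are not globally Lipschitz on $(\mathbb{R}^d)^N$ (the softmax and norm terms blow up off the sphere), so the well-posedness and uniqueness argument needs the a-priori sphere invariance (the paper's Lemma~\ref{lemma:sphere_supported}, or equivalently a smooth cutoff of $\Att_\beta$ off the sphere as the paper does with $\widetilde{\Att}_\beta$); you do implicitly restrict to $(\mathbb{S}^{d-1})^N$, but this step should be spelled out.
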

The proof of this result relies on the limiting properties of the nonlinear difference equation \eqref{eq:discrete_time} and their characterization via a martingale problem \cite{kushner1981weak, watanabe1983diffusion, watanabe1984diffusion}. We defer the proof to App.~\ref{app:A3}.

Thm.~\ref{thm:ctl_transformer} is reminiscent of the celebrated Donsker invariance principle \cite{donsker_inv}, which states that random walks, when properly scaled, converge weakly to Brownian motions. In this context, we treat our discrete scheme \eqref{eq:discrete_time} as a modified random walk on $(\mathbb{S}^{d-1})^N$. It is important to note that, unlike strong convergence, we do not require Gaussian steps. In fact, regardless of the specific random initialization, all will converge to the same limit as \( L \to \infty \), provided the second moment is fixed, entries are centered and the support is bounded. This ensures convergence for \textbf{any} centered initialization of the Value matrices, with no need for Gaussian initialization.

It is worth saying a few words about the additional drift term in equation~\eqref{eq:transformer_sde}. A naive approach to stochasticizing the deep Transformer would be to introduce a matrix-valued Brownian motion directly into the continuous-time attention dynamics on the sphere:
\[
 dX^i_t=P^{\perp}_{X^i_t}[dW_t\Att_\beta(X^i_t, \bm{X}_t)],
\]
in direct analogy with ResNets (see Table~\ref{tab:scaling-regimes}). 
However, such an equation does not preserve the spherical constraint.
Even if the noise is tangential, applying It\^o's formula to the evolution of $\|X^i_t\|^2$ given by the formula above
reveals a nonzero radial drift generated by second-order terms, causing trajectories to leave the sphere.
This is a classical phenomenon in the theory of spherical Brownian motion \cite{stroock1971growth}.
As a simple illustration, consider $X_t = (X^1_t,X^2_t)\in\mathbb{S}^1$ evolving according to
$dX_t=\tfrac{1}{\|X_t\|}(-X^2_t, X^1_t)\,dW_t$.
Although $\langle X_t,dX_t\rangle=0$, It\^o's formula yields $d\|X_t\|^2=dt$, so $X_t$ immediately leaves the sphere. By contrast, we have the following lemma.
\begin{lemma}\label{lemma:sphere_supported}
    For any sphere-supported initial distribution $\bm{Y}_0\in(\mathbb{S}^{d-1})^N$, any solution of equation~\eqref{eq:transformer_sde} remains on the unit sphere with probability $1$.
\end{lemma}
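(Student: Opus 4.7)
The plan is to apply It\^o's formula to the radial quantity $R^i_t := \|Y^i_t\|^2$ for each token $i$ and verify that the drift $\tfrac{(1-d)\sigma^2}{2}\|\Att_\beta(Y^i_t,\bm{Y}_t)\|^2\,Y^i_t$ in equation~\eqref{eq:transformer_sde} is precisely the It\^o correction that pins $R^i_t$ at $1$ whenever it starts at $1$. In other words, the spherical constraint is encoded into the SDE by construction, and this will be made explicit through a direct computation of $d\|Y^i_t\|^2$.

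First, I would compute the quadratic covariation of each token. Writing $A^i_t := \Att_\beta(Y^i_t,\bm{Y}_t)$ and using that the entries of $W_t$ are independent scalar Brownian motions, the vector $dW_t\,A^i_t$ has infinitesimal covariance $\|A^i_t\|^2\,\operatorname{Id}(d)\,dt$. Applying the symmetric operator $\sigma P^{\perp}_{Y^i_t}$, extended to $\mathbb{R}^d$ as $\sigma\bigl(\operatorname{Id}(d) - Y^i_t(Y^i_t)^\top\bigr)$, then gives
\[
d\langle Y^i, Y^i\rangle_t \;=\; \sigma^2 \|A^i_t\|^2\,\bigl(\operatorname{Id}(d) - Y^i_t (Y^i_t)^\top\bigr)^2\, dt.
\]
Applying It\^o's formula to $R^i_t$, the identity $(\operatorname{Id}(d) - YY^\top)Y = (1-\|Y\|^2)Y$ makes the stochastic part proportional to $(1-R^i_t)$, while the trace expansion $\operatorname{tr}\bigl((\operatorname{Id}(d)-YY^\top)^2\bigr) = d - 2\|Y\|^2 + \|Y\|^4$ together with the prescribed drift yields, after the simplification $(R-1)(R-d) = (1-R)(d-R)$, the factorized SDE
\[
dR^i_t \;=\; (1-R^i_t)\Bigl[\sigma^2\|A^i_t\|^2 (d - R^i_t)\, dt + 2\sigma \langle Y^i_t, dW_t A^i_t\rangle \Bigr].
\]

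To conclude, setting $Z^i_t := 1 - R^i_t$ turns the above into a linear SDE whose drift and diffusion coefficients both vanish at $Z^i_t = 0$. Pathwise uniqueness --- or equivalently the explicit Dol\'eans--Dade exponential representation of its solution --- then forces $Z^i_0 = 0 \Rightarrow Z^i_t \equiv 0$ almost surely, giving $\|Y^i_t\| = 1$ for all $t \geq 0$ and all $i$.

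The main obstacle I anticipate is a small technical point about well-posedness: It\^o's formula presupposes a solution defined up to the time horizon of interest. Since $\Att_\beta$ is smooth and uniformly bounded on $(\mathbb{S}^{d-1})^N$, local existence is standard, and the sphere-preservation argument above rules out any explosion, so any local solution starting on the sphere extends globally while remaining on it.
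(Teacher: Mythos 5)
Your proposal is correct and follows essentially the same route as the paper: apply It\^o's formula to $R^i_t=\|Y^i_t\|^2$ and observe that the resulting one-dimensional SDE degenerates at $R^i_t=1$. One place where your write-up is actually tighter than the paper's: you keep the martingale term $2\sigma(1-R^i_t)\langle Y^i_t, dW_t A^i_t\rangle$ explicit and show it factors through $(1-R^i_t)$, whereas the paper drops it by invoking $\langle P^{\perp}_{x}[y],x\rangle=0$, an identity that holds only when $\|x\|=1$ --- i.e.\ precisely the conclusion being sought. Since that martingale does carry the factor $(1-R^i_t)$, the paper's conclusion is unharmed, but your factorized form $dR^i_t = (1-R^i_t)\bigl[\sigma^2\|A^i_t\|^2(d-R^i_t)\,dt + 2\sigma\langle Y^i_t, dW_t A^i_t\rangle\bigr]$ makes the degeneracy manifest without any circularity. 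One small caveat: the SDE for $Z^i_t=1-R^i_t$ carries a $Z^2$ term in its drift, so it is not literally the Dol\'eans--Dade exponential of a fixed semimartingale; the right justification is pathwise uniqueness for this Bernoulli-type equation with locally Lipschitz coefficients near $Z=0$, which you do also invoke, so the argument stands.
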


\subsection{Bridging Deterministic and Stochastic Dynamics}

Beyond initialization, training naturally leads to dynamics that combine a
deterministic component with a stochastic one.
The intuition that training adds a deterministic component with strong
correlations across successive layers is supported by numerical experiments in
\citet{pmlr-v139-cohen21b,marion2024scalingresnetslargedepthregime} for i.i.d.\
initialization, and by both numerical experiments and proofs in
\citet{marion2023implicit} in the weight-tied setting.
The hybrid model considered in this section, where the deterministic component
corresponds to an identity value matrix shared across all layers, should be
viewed as a first step toward understanding signal propagation after
initialization.

To precisely trace how intrinsic noise alters the clustering behavior, we study a
parametrized perturbation of the deterministic dynamics.
We therefore introduce a \emph{hybrid model}, in which the update step combines a
deterministic drift of order $1/L$ with a stochastic perturbation of order
$\varepsilon/\sqrt{L}$:
\begin{equation}\label{eq:hybrid_discrete}
    X^{i}_{n+1}
    =
    \frac{
        X^{i}_n + \omega_{\varepsilon}(n,L)\,\Att_\beta(X^{i}_n,\bm{X}_n)
    }{
        \bigl\|
        X^{i}_n + \omega_{\varepsilon}(n,L)\,\Att_\beta(X^{i}_n,\bm{X}_n)
        \bigr\|
    }.
\end{equation}
Here $n\in\mathbb N_0$, $X_0^i\in\mathbb S^{d-1}$, and
\[
\omega_{\varepsilon}(n,L)=\tfrac{1}{L}+\varepsilon\tfrac{v_n}{\sqrt{L}},
\]
where $\{v_n\}\subset\mathbb{R}$ are i.i.d. centered random variables with unit variance and bounded support, and $\varepsilon \ge 0$ controls the noise amplitude; $\varepsilon=0$ corresponds to the deterministic case.
Its continuous-time limit makes explicit how noise enters the drift and qualitatively alters the long-time clustering behavior. %

The following two lemmas characterize this limit. The first shows that for $\varepsilon=0$, we recover the deterministic attention flow studied in \citet{geshkovski2025mathematical}.

\begin{lemma}\label{lemma:strong_conv}
    When $\varepsilon=0$, the rescaled dynamics \eqref{eq:hybrid_discrete} converge uniformly to the solution of
    \[
        dY^i_t = P^{\perp}_{Y^i_t}\left[\Att_\beta(Y^i_t, \bm{Y}_t)\right]\,dt,
        \qquad Y^i_0 = X^i_0.
    \]
    That is,
    \(
    \sup\limits_{i,n} \| X_n^i - Y^i_{n/L} \| \to 0 \quad \text{as } L \to \infty,
    \)
    where the supremum is taken over $1 \le i \le N$ and $0 \le n \le L-1$.
\end{lemma}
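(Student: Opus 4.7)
The plan is to recognize the $\varepsilon=0$ scheme in~\eqref{eq:hybrid_discrete} as an explicit Euler discretization, on the sphere, of the ODE $\dot Y^i = F^i(\bm Y)$ with $F^i(\bm X) := P^{\perp}_{X^i}[\Att_\beta(X^i,\bm X)]$, and then close the argument by a discrete Gronwall on the finite horizon $[0,1]$.

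\textbf{Step 1 (Taylor expansion of the normalization).} Because $\|X^i_n\|=1$, setting $\Delta^i_n := L^{-1}\Att_\beta(X^i_n,\bm X_n)$ and using that $\Att_\beta$ is uniformly bounded on the compact set $(\mathbb S^{d-1})^N$ (the attention weights lie in $[0,1]$ for SA, and for USA the mean of bounded exponentials is controlled), we get $\|\Delta^i_n\| \leq C/L$. Expanding $(1 + 2\langle X^i_n,\Delta^i_n\rangle + \|\Delta^i_n\|^2)^{-1/2}$ to second order then yields $X^i_{n+1} = X^i_n + \tfrac{1}{L}\,F^i(\bm X_n) + r^i_n$, with $\|r^i_n\| \leq C'/L^2$. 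The linear term is tangential because $\langle X^i_n, \Delta^i_n\rangle X^i_n$ cancels exactly with the first-order correction from the normalizer.

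\textbf{Step 2 (Lipschitz control of the vector field).} The vector field $F$ is a smooth composition of bounded linear maps, a softmax (for SA) or a mean of exponentials (for USA), and the projection $P^{\perp}_{X^i}$ which is polynomial in $X^i$. Restricted to the compact product sphere, $F$ is therefore globally Lipschitz with some constant $C_F$. Existence, uniqueness, and sphere-invariance of $\bm Y$ follow from Cauchy--Lipschitz together with the tangency identity $\tfrac{d}{dt}\|Y^i_t\|^2 = 2\langle Y^i_t, F^i(\bm Y_t)\rangle = 0$. Smoothness of $\bm Y$ in time then gives the standard one-step consistency estimate $\|Y^i_{(n+1)/L} - Y^i_{n/L} - \tfrac{1}{L}F^i(\bm Y_{n/L})\| \leq C''/L^2$.

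\textbf{Step 3 (Discrete Gronwall).} Define $e_n := \max_i \|X^i_n - Y^i_{n/L}\|$. Subtracting the discrete update of Step 1 from the time-discretized ODE of Step 2, the Lipschitz bound yields $e_{n+1} \leq (1 + C_F/L)\,e_n + C/L^2$. Iterating from $e_0 = 0$ over $n = 0,\dots,L-1$ gives $e_n \leq C\,e^{C_F}/L$, which establishes the claimed uniform convergence.

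The main technical obstacle is the uniform Lipschitz bound on $F$ in the softmax case: one must control derivatives of softmax entries whose arguments $\beta\langle QX^i, KX^j\rangle$ may be large in $\beta$. However, compactness of the sphere keeps all these arguments in a fixed bounded interval, so the softmax Jacobian has a finite operator norm and $C_F$ depends only (and implicitly) on $\beta$, $Q$, $K$, $N$, and $d$; the rest of the argument is then a textbook Euler-scheme analysis on a compact invariant manifold.
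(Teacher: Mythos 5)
Your proposal is correct and follows essentially the same route as the paper: Taylor-expand the normalizer $(x+\delta y)/\|x+\delta y\|$ to identify the leading-order drift $\delta\,P^{\perp}_x[y]$ with an $O(\delta^2)$ remainder (here $\delta = 1/L$), then close with a discrete Gr\"onwall argument on the finite horizon. You are slightly more explicit than the paper (which writes the remainder as $\bar{o}(1/L)$ and merely cites ``a standard Gr\"onwall argument''), and you spell out the Lipschitz/consistency bookkeeping, but the structure is identical, so no substantive difference to flag.
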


The second lemma gives the stochastic limit for $\varepsilon > 0$, analogous to Thm.~\ref{thm:ctl_transformer}.

\begin{lemma}\label{lemma:dif_limit_interp}
    Let $\bm{X}^\varepsilon_L(t)$ be the linear interpolation of \eqref{eq:hybrid_discrete} and let $\bm{Y}^\varepsilon_t$ solve the SDE
    \begin{multline}\label{eq:hybrid_continuous}
    dY^{i}_t = \Bigl[\varepsilon^2\mathcal{F}\bigl(Y^{i}_t, \Att_\beta(Y^{i}_t, \bm{Y}_t)\bigr)
    + P^{\perp}_{Y^{i}_t}\!\bigl[\Att_\beta(Y^{i}_t, \bm{Y}_t)\bigr]\Bigr]dt \\
    + \varepsilon\,P^{\perp}_{Y^{i}_t}\!\bigl[\Att_\beta(Y^{i}_t, \bm{Y}_t)\bigr]\, dB_t,
    \end{multline}
    where $\mathcal{F}(x, y) \vcentcolon= \tfrac{3}{2}\langle x, y\rangle^{2} x - \tfrac{1}{2}\|y\|^{2}x - \langle x, y\rangle\,y$, and $B_t$ is a scalar Brownian motion. Then
    \[
    \bm{X}^\varepsilon_L \xrightarrow{d} \bm{Y}^\varepsilon \quad \text{as } L \to \infty.
    \]
    The latter one holds for $\bm{Y}^\varepsilon$ and $\bm{X}_L^\varepsilon$ viewed as random elements of $C(\mathbb{R}_+;(\mathbb{R}^d)^N).$
\end{lemma}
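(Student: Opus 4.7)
The plan is to adapt the weak-convergence argument behind Thm.~\ref{thm:ctl_transformer}: expand the normalized update to second order, read off the drift and diffusion coefficients of the candidate limit, establish tightness of the interpolated processes in $C(\mathbb{R}_+;(\mathbb{R}^d)^N)$, and identify all subsequential limits with the unique solution of \eqref{eq:hybrid_continuous} via the associated martingale problem. The new feature relative to Thm.~\ref{thm:ctl_transformer} is that the randomness here is scalar, so the limiting SDE is driven by a \emph{single} Brownian motion $B_t$ shared across all tokens, and the interaction between the deterministic $1/L$ drift and the $\varepsilon/\sqrt{L}$ noise through the nonlinear normalization generates the additional drift $\varepsilon^2 \mathcal{F}$.

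Concretely, setting $a_n^i = \Att_\beta(X^i_n, \bm{X}_n)$ and $w_n = \omega_\varepsilon(n, L)$, and using $\|X^i_n\| = 1$, a second-order expansion of $(1 + 2w_n\langle X^i_n, a_n^i\rangle + w_n^2\|a_n^i\|^2)^{-1/2}$ gives
\begin{equation*}
\Delta X^i_n \;=\; w_n\, P^\perp_{X^i_n}[a_n^i] \;+\; w_n^2\, \mathcal{F}(X^i_n, a_n^i) \;+\; R^i_n,
\end{equation*}
with $|R^i_n| \le C\,|w_n|^3$. Substituting $w_n = 1/L + \varepsilon v_n/\sqrt{L}$ and using $\mathbb{E}[v_n]=0$, $\mathbb{E}[v_n^2]=1$, one obtains $L\,\mathbb{E}[\Delta X^i_n \mid \mathcal{F}_n] \to P^\perp_{X^i_n}[a_n^i] + \varepsilon^2 \mathcal{F}(X^i_n, a_n^i)$, while the fluctuating contribution $\varepsilon v_n P^\perp_{X^i_n}[a_n^i]/\sqrt{L}$ produces, per step, a conditional quadratic variation of $\varepsilon^2 P^\perp_{X^i_n}[a_n^i]\bigl(P^\perp_{X^i_n}[a_n^i]\bigr)^\top / L$. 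Both quantities match exactly the drift and diffusion coefficients of \eqref{eq:hybrid_continuous}, with the scalar nature of $v_n$ forcing the limit to be driven by a single Brownian motion common to all tokens.

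I would then close the argument with the same framework of \citet{kushner1981weak} invoked for Thm.~\ref{thm:ctl_transformer}. Tightness of $\bm{X}_L^\varepsilon$ follows because the iterates remain in the compact set $(\mathbb{S}^{d-1})^N$ (so $\Att_\beta$ is bounded and Lipschitz there), and $\mathbb{E}[\|\Delta X^i_n\|^2 \mid \mathcal{F}_n] = O(1/L)$. Smoothness and boundedness of the drift $P^\perp[a] + \varepsilon^2 \mathcal{F}$ and of the diffusion coefficient $\varepsilon P^\perp[a]$ on the sphere guarantee well-posedness of the martingale problem associated with \eqref{eq:hybrid_continuous}, and a spherical-constraint computation analogous to Lemma~\ref{lemma:sphere_supported} shows that solutions remain on $(\mathbb{S}^{d-1})^N$. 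Any subsequential limit therefore coincides with $\bm{Y}^\varepsilon$, and the full sequence converges in distribution.

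The main obstacle is controlling the Taylor remainder uniformly in $n$ and $L$: one must show $\sum_{n=0}^{\lfloor Lt\rfloor} R^i_n \to 0$ in probability. This is where the almost-sure boundedness $|v_n| \le M$ — the scalar analogue of Assumption~\ref{ass:bounded_noise} — enters crucially: it yields $|w_n| \le C/\sqrt{L}$ a.s., so the per-step remainder is $O(L^{-3/2})$ and the accumulated contribution is $O(L^{-1/2})$. A finite-moment assumption alone would be insufficient, since then $v_n^3$ need not be integrable with the right moments, and neither the cubic remainder nor the identification of the $\varepsilon^2 \mathcal{F}$ drift would close cleanly.
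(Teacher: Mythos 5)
Your proposal is correct and follows essentially the same route the paper takes: expand the normalized update to second order (the paper's function $G$ in App.~\ref{app:A3} is exactly your $\mathcal{F}$), read off the conditional drift $P^\perp[a]+\varepsilon^2\mathcal{F}$ and the per-step quadratic variation $\varepsilon^2 P^\perp[a](P^\perp[a])^\top/L$ (scalar $v_n$ forcing a single common $B_t$), control the remainder via bounded support of $v_n$, and close with the martingale-problem machinery used for Thm.~\ref{thm:ctl_transformer}. The paper's appendix proof is terse — it simply exhibits the truncated recursion with $\tfrac{\varepsilon v_{n+1}}{\sqrt L}P^\perp[a]+\tfrac1L(G(Y,\varepsilon v_{n+1}a)+P^\perp[a])$ and declares "apply exactly the same argument" (Grönwall to compare truncated and full dynamics, then Watanabe's theorem and a Slutsky step) — so your version fills in the moment calculations the paper leaves implicit; your observation that bounded support, not merely finite moments, is what makes the cubic remainder and the $\varepsilon^2\mathcal{F}$ identification close uniformly is exactly the role Assumption~\ref{ass:bounded_noise} (and its scalar analogue) plays in the original proof.
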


This limiting SDE explicitly shows how the noise amplitude $\varepsilon$ reweights the drift (through $\varepsilon^2\mathcal{F}$) and adds a diffusion term. In Section~\ref{subsec:bridge}, we analyze the long‑time behavior of \eqref{eq:hybrid_continuous} and demonstrate that a sufficiently large $\varepsilon$ provably switches the clustering outcome from a single point to the antipodal configuration.
\section{Clustering}
\label{sec:clustering}
In this section, we begin by defining the notion of clustering in the stochastic setting. We then state the main results concerning the long-time behavior of Transformer SDEs and highlight the relationship between $(\beta, d)$ and the clustering behavior in the case $N = 2$. %

\subsection{Notion of Clustering} %

We first define clustering configurations for two tokens.

\begin{definition}
    Let $Y^i$ and $Y^j$ ($i\neq j$) be two tokens among $N$ governed by the dynamics \eqref{eq:transformer_sde}. We define \textbf{clustering to a single token} as the following random event: 
    \begin{equation}\label{eq:single_cluster}
    A_1^{ij}\colon\quad\left\|
    Y_t^{i} - Y_t^{j}
    \right\|\to 0 \text{ as } t\to\infty.
    \end{equation}
    Similarly, we define \textbf{clustering to two antipodal tokens} as the following random event:
    \begin{equation}\label{eq:two_clusters}
    A_2^{ij}\colon\quad \left\|
    Y_t^{i} + Y_t^{j}
    \right\|\to 0 \text{ as } t\to\infty.
    \end{equation}
    When $N=2,$ we denote $A_1 = A_1^{12}$ and $A_2 = A_2^{12}$.
\end{definition}

\subsection{On the Relation Between $\beta$ and $d$ with Clustering}

When the number of tokens is restricted to $N=2$, the long-time behavior of the
Transformer SDEs can be analyzed explicitly.
To obtain closed-form phase-transition thresholds in terms of $(\beta,d)$,
we introduce the following simplifying assumption, which is used
\emph{only} for this explicit calculation.

\begin{assumption}\label{ass:id_QK}
    For the purpose of explicit phase-boundary computations,
    the key and query matrices satisfy
    $Q^{\top}K= \operatorname{Id}(d)$.
\end{assumption}

This identity specialization simplifies the interaction geometry and allows for
closed-form expressions.
Importantly, it %
is \emph{not required} for the existence of the continuous-time limit or for the
general formulation of the Transformer SDEs.

Under this assumption, explicit phase transitions can be characterized in the
two-token case.%
\begin{theorem}\label{th:phase_transition}
    Let $N=2$ and let the tokens follow~\eqref{eq:transformer_sde}.
    Suppose that Assumption \ref{ass:id_QK} holds. Then, $\mathbb{P}(A_1)+\mathbb{P}(A_2)=1$. Moreover, $\mathbb{P}(A_1)>0$ and furthermore:
    \begin{itemize}[noitemsep,topsep=0pt]
        \item $d-2 \geq \cosh(2\beta)$ implies $\mathbb{P}(A_2)=0$,
        \item $d-2 < \cosh(2\beta)$ implies $\mathbb{P}(A_2)>0$.
    \end{itemize}
\end{theorem}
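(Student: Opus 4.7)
The plan is to reduce the vector-valued SDE on $(\mathbb{S}^{d-1})^2$ to a scalar SDE for $Z_t := \langle Y^1_t, Y^2_t\rangle \in [-1,1]$, since $A_1$ and $A_2$ correspond exactly to $Z_t \to 1$ and $Z_t \to -1$. Applying It\^o's formula to $Z_t$ and using the two-token symmetry under Assumption~\ref{ass:id_QK}, all attention quantities collapse to scalar functions of $Z_t$: $\|\Att_\beta(Y^i_t, \bm{Y}_t)\|^2 = a(Z_t)^2$ and $\langle \Att_\beta(Y^1_t, \bm{Y}_t), \Att_\beta(Y^2_t, \bm{Y}_t)\rangle = c(Z_t)$, where for USA one has $a^2(Z) = \tfrac14(e^{2\beta} + 2Z e^{\beta(1+Z)} + e^{2\beta Z})$ and $c(Z) = \tfrac14(2 e^{\beta(1+Z)} + Z(e^{2\beta}+e^{2\beta Z}))$ (the SA case is analogous after dividing by the softmax denominator). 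Tracking the common-noise quadratic covariations through the tangential projections $P_{Y^i}^\perp$ yields
\[
dZ_t = b(Z_t)\,dt + s(Z_t)\,dB_t,\quad b(Z) = (1-d)\sigma^2 a^2 Z + \sigma^2 c(d-2+Z^2),\quad s^2(Z) = 2\sigma^2(1-Z^2)(a^2-Zc).
\]
Both $b$ and $s^2$ vanish at $Z=\pm 1$, while $s^2>0$ in the interior. Pathwise uniqueness of~\eqref{eq:transformer_sde} together with the symmetries $Y^2_0 = \pm Y^1_0 \Rightarrow Y^2_t = \pm Y^1_t$ makes both endpoints absorbing, so the theorem reduces to a Feller boundary analysis of this 1D diffusion on $[-1,1]$.

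Near $Z=1$, setting $v := 1-Z$ and using $a^2(1) = c(1) = e^{2\beta}$ together with the coincidence $(a^2)'(1) = c'(1)$, one finds
\[
b(Z) = (d-3)\sigma^2 e^{2\beta}\,v + O(v^2),\qquad s^2(Z) = 4\sigma^2 e^{2\beta}\,v^2 + O(v^3),
\]
so that $v_t$ locally satisfies a geometric-Brownian-motion-like SDE with logarithmic drift $-(d-1)\sigma^2 e^{2\beta} < 0$. Feller's scale-function test then shows that $Z=1$ is always an accessible/attracting boundary, which gives both $\mathbb{P}(A_1)>0$ and, by the exit theory of a 1D diffusion with two absorbing endpoints at least one of which is accessible, $\mathbb{P}(A_1) + \mathbb{P}(A_2) = 1$.

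The phase transition comes from the symmetric expansion at $u := 1+Z$. Explicit computation of $a^2$, $c$ and their first derivatives at $Z=-1$, combined with the identity $\cosh^2\beta + \sinh^2\beta = \cosh(2\beta)$, gives
\[
a^2 - Zc = \cosh(2\beta)\,u + O(u^2),\quad b(Z) = \sigma^2\bigl(d-2+\cosh(2\beta)\bigr)\,u + O(u^2),\quad s^2(Z) = 4\sigma^2\cosh(2\beta)\,u^2 + O(u^3).
\]
Hence $u_t$ locally behaves like a geometric Brownian motion with logarithmic drift $\sigma^2(d-2+\cosh(2\beta)) - 2\sigma^2\cosh(2\beta) = \sigma^2(d-2-\cosh(2\beta))$. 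When this quantity is nonnegative, a Feller-type scale-function argument renders $Z=-1$ inaccessible from the interior, forcing $\mathbb{P}(A_2)=0$; when it is strictly negative, the boundary is attainable with positive probability, giving $\mathbb{P}(A_2)>0$. This matches the announced threshold $d-2 = \cosh(2\beta)$ exactly.

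The main difficulty I anticipate is rigorously lifting these local geometric-BM linearizations into global accessibility conclusions. I plan to handle it by applying Feller's boundary test directly via the explicit scale function $p(x) = \int \exp\bigl(-\int 2b/s^2\bigr)$ and speed measure built from $b$ and $s^2$, then using the leading-order expansions above to control the integrability of these objects at $\pm 1$, and finally checking that the higher-order remainders are harmless. A secondary but nontrivial check is the SA case, where the softmax denominator must be re-expanded; however the leading-order coefficients coincide with USA thanks to $Q^\top K = \operatorname{Id}(d)$, so the resulting thresholds are identical.
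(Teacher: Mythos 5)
Your overall route coincides with the paper's: reduce to a closed one-dimensional SDE for the overlap $Z_t=\langle Y^1_t,Y^2_t\rangle$, and classify the boundaries $\pm1$ via the scale function. Your drift/diffusion coefficients simplify to exactly the paper's expressions (after substituting your $a^2,c$ one recovers $b=\tfrac{\sigma^2}{4}(1-Z^2)[2(d-2)e^{\beta(1+Z)}-Z(e^{2\beta}+e^{2\beta Z})]$ and $s^2=\tfrac{\sigma^2}{2}(1-Z^2)^2(e^{2\beta}+e^{2\beta Z})$), and your local geometric-Brownian-motion linearizations near $\pm1$ produce the same scale-function exponents and hence the same threshold $d-2=\cosh(2\beta)$. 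You also correctly observe that the ratio $2b/s^2$, and hence the scale function, is the same for unnormalized and softmax attention.

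There is a genuine gap in how you conclude $\mathbb{P}(A_1)+\mathbb{P}(A_2)=1$. You describe $Z=1$ as ``accessible/attracting'' and appeal to ``the exit theory of a 1D diffusion with two absorbing endpoints at least one of which is accessible.'' But both boundaries are in fact \emph{inaccessible}: the process never reaches $\pm1$ in finite time. This follows from the $\tanh^{-1}$ change of variables in Lemma~\ref{lemma:unattainability}, or can be read off from your own local expansion, since $\log v_t$ and $\log u_t$ behave like nondegenerate Brownian motions with drift near the boundary and therefore do not hit $-\infty$ in finite time. Consequently there is no exit time, the boundaries are absorbing only in the vacuous sense of degenerate initial data, and the exit-theory dichotomy you invoke does not apply. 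What is actually needed is the martingale-convergence argument: $\bm{S}(Z_{t\wedge\tau_a})$ is a bounded true martingale when $\bm{S}(1)<\infty$, the martingale convergence theorem provides an a.s.\ limit, inaccessibility of $+1$ identifies $\mathbb{P}(\tau_a=\infty)$ with the scale-function ratio, and one must then argue that on $\{\tau_a=\infty\}$ the limit equals $\bm{S}(1)$. The distinction you blur is precisely the one that matters: the scale-function integrability at $-1$ controls whether the boundary is \emph{attracting} (approached asymptotically with positive probability), not whether it is attainable, and this is what the explicit threshold $d-2<\cosh(2\beta)$ governs.
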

The proof is deferred to App.~\ref{app:proof_of_phase_trans}.

\noindent\textbf{Contrast with the deterministic case.} This phase transition is a direct consequence of intrinsic noise. In the deterministic setting with $K=Q=\operatorname{Id}(d)$,  tokens always converge to a single cluster for any $\beta, d$ (\citet{criscitiello2024synchronization}, see also Thm~6.1 in \citep{geshkovski2025mathematical}). Here, stochasticity unlocks a new regime: for $\beta > \frac{1}{2}\cosh^{-1}(d-2)$, the antipodal configuration emerges as a possibility.

Importantly, Thm.~\ref{th:phase_transition} provides a \emph{partial} solution to Problem~2.15 posed by \citet{geshkovski2025mathematical}, asking whether the deterministic clustering conclusions could be generalized to Transformers with random matrices $(Q, K, V)$. 

\subsection{Noise-Induced Clustering}
\label{subsec:bridge}

The continuous‑time limit of the hybrid model, given by Lem.~\ref{lemma:dif_limit_interp}, provides a transparent setting to quantify how noise reshapes clustering. For the two‑token case, we can derive an explicit threshold that separates the single‑cluster and antipodal regimes.

\begin{proposition}[Noise‑induced clustering transition]
\label{lem:hybrid_clustering}
    Let $\Att_\beta=\Att^\mathrm{U}_\beta,$ $N=2$ and Assumption~\ref{ass:id_QK} hold. The solution of the limiting SDE \eqref{eq:hybrid_continuous} satisfies:
    \\
    {\bf 1.} If $\varepsilon^2 < 2e^{-\beta}$, 
            \(
            \lim_{t\to\infty} \|Y^1_t - Y^2_t\| = 0,
            \)
           with probability 1, i.e., the tokens converge to a single cluster.
           \\
    {\bf 2.} If $\varepsilon^2 > 2e^{-\beta}$, 
            \(
            \lim_{t\to\infty} \|Y^1_t + Y^2_t\| = 0,
            \)
             with probability 1, i.e., the tokens converge to the antipodal configuration.

\end{proposition}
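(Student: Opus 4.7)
The natural approach is to reduce the vector-valued SDE~\eqref{eq:hybrid_continuous} to a one-dimensional SDE for the inner product $Z_t = \langle Y^1_t, Y^2_t\rangle \in [-1,1]$, since the events $A_1$ and $A_2$ correspond respectively to $Z_t \to 1$ and $Z_t \to -1$. Under Assumption~\ref{ass:id_QK} and $N=2$, the USA attention vectors read $A^1_t = \tfrac{1}{2}(e^\beta Y^1_t + e^{\beta Z_t} Y^2_t)$ and symmetrically for $A^2_t$, so every scalar quantity appearing in It\^o's formula for $Z_t$---namely $\langle P^\perp_{Y^i}[A^i], Y^j\rangle = \tfrac{1}{2}e^{\beta Z}(1-Z^2)$, $\|A^i\|^2$, $\langle \mathcal{F}(Y^i, A^i), Y^j\rangle$, and the cross-variation $\langle P^\perp_{Y^1}[A^1], P^\perp_{Y^2}[A^2]\rangle = -\tfrac{1}{4}Z e^{2\beta Z}(1-Z^2)$---becomes a closed-form function of $Z$ alone. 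Collecting the drift, diffusion and quadratic-covariation contributions then yields a scalar SDE $dZ_t = b(Z_t)\,dt + \sigma(Z_t)\,dB_t$ with $\sigma(Z) = \varepsilon e^{\beta Z}(1-Z^2)$ and $b(Z) = e^{\beta Z}(1-Z^2) + \varepsilon^2 h(Z)$, where $h$ gathers the $\mathcal{F}$-drifts and the It\^o cross-variation.

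Both $Z = \pm 1$ are equilibria of this SDE, so the analysis reduces to a boundary-attraction problem. Setting $\eta = 1 + Z$ and differentiating the explicit expressions yields, after cancellations, the clean identity $h'(-1) = 2 e^{-2\beta} - 1$; to leading order, $dZ_t \sim [2 e^{-\beta} + \varepsilon^2(2 e^{-2\beta} - 1)]\eta_t\,dt + 2\varepsilon e^{-\beta}\eta_t\, dB_t$, i.e., a geometric-Brownian-motion-type regime for $\eta_t$. It\^o's formula on $\log \eta_t$ then collapses the $\varepsilon^2$ contributions to a clean leading-order drift of $2 e^{-\beta} - \varepsilon^2$. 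A symmetric computation at $Z = +1$ with $\eta = 1 - Z$ yields a leading-order log-drift of $e^{2\beta}(\varepsilon^2 - 2 e^{-\beta})$, with opposite sign. Both thresholds coincide at $\varepsilon^2 = 2 e^{-\beta}$: in the regime $\varepsilon^2 < 2 e^{-\beta}$ the log-drift near $+1$ is negative (local attraction) and that near $-1$ is positive (local repulsion), and the opposite holds for $\varepsilon^2 > 2 e^{-\beta}$.

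To upgrade these local statements to almost-sure convergence from any interior initial condition, we apply Feller's boundary classification to the non-degenerate scalar diffusion $Z_t$ on $(-1,1)$. A short computation from the asymptotics above shows that the scale density $s'(Z) \propto \exp(-\int^{Z} 2b/\sigma^2)$ behaves like $(1+Z)^{-\lambda^-}$ near $-1$, with $\lambda^- = (2 e^{-\beta} + \varepsilon^2 h'(-1))/(2\varepsilon^2 e^{-2\beta})$, and $\lambda^- < 1$ is exactly equivalent to $\varepsilon^2 > 2 e^{-\beta}$; the symmetric calculation at $+1$ shows that in each regime precisely one boundary is accessible and the other is an entrance (hence unattainable from the interior). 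Standard theory of one-dimensional diffusions then forces $Z_t$ to converge almost surely to the accessible boundary, giving the two claimed dichotomies. The main obstacle is the explicit algebra behind the identity $h'(-1) = 2 e^{-2\beta} - 1$ (and its analogue at $+1$): each ingredient is elementary, but combining the contributions of $\langle Y^1, A^1\rangle$, $\langle A^1, Y^2\rangle$, $\|A^1\|^2$, their $Z$-derivatives, and the $\varepsilon^2$ cross-variation into the clean threshold $\varepsilon^2 = 2 e^{-\beta}$ requires careful book-keeping.
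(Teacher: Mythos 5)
Your approach matches the paper's: reduce to a one‑dimensional overlap diffusion $Z_t=\langle Y^1_t,Y^2_t\rangle$, compute the scale density, and classify the boundaries $\pm1$ by the integrability of $\bm{s}$ near each endpoint. Your scalar ingredients and the asymptotic exponent near $-1$ (equivalently, $\lambda^-<1 \Leftrightarrow \varepsilon^2>2e^{-\beta}$, matching the paper's $\bm{s}(-1+\delta)\sim\delta^{-1+e^{2\beta}/2-e^{\beta}/\varepsilon^2}$) check out, and the intermediate log‑drift / geometric‑Brownian heuristic, while not in the paper, consistently predicts the same threshold.

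One terminological gap worth correcting: you conclude that ``precisely one boundary is accessible and the other is an entrance,'' and that $Z_t$ therefore converges to the \emph{accessible} boundary. In fact neither boundary is accessible — the paper's Lemma on attainability shows $|Z_t|<1$ for all finite $t$ (the diffusion is transformed via $\tanh^{-1}$ into a non‑explosive SDE). The correct dichotomy is \emph{attracting} vs.\ \emph{non‑attracting} in the sense of $\bm{S}(\pm1)$ being finite or infinite (Karlin–Taylor Ch.~15, §6). Since in the hybrid model exactly one of $\pm1$ is attracting for each $(\beta,\varepsilon)$, the a.s.\ convergence to that boundary follows from the $\bm{S}(Z_t)$‑martingale argument used in the proof of Theorem~\ref{th:phase_transition}, not from first‑passage to an accessible endpoint. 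With that substitution, your argument is complete.
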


\noindent
Prop.~\ref{lem:hybrid_clustering} delivers a sharp, interpretable condition: a sufficiently large noise amplitude $\varepsilon$ (relative to the inverse temperature $\beta$) forces the system into the antipodal phase. This result complements Thm.~\ref{th:phase_transition} obtained for the full stochastic transformer.
Numerical simulations (Fig.~\ref{fig:noise_ind}) confirm the theoretical threshold $\varepsilon_c(\beta) = \sqrt{2e^{-\beta}}$ and illustrate how the noise‑controlled transition manifests in the discrete dynamics \eqref{eq:hybrid_discrete}. Together, these findings underscore that the stochasticity inherent in random initialization is not a technical detail, but qualitatively alters the clustering landscape of deep transformers.

As shown in App.~\ref{app:overlap_diff}, the overlap diffusion—namely, the one-dimensional diffusion governing the pairwise inner product of two tokens—reveals why antipodal configurations are reachable. In addition, apart from coincidence and antipodality, the diffusion is strictly positive, making intermediate states unstable. Near the boundaries,  both drift and diffusion vanish linearly.

\section{Experimental Results}\label{sec:numerics}
\begin{figure}[h]
    \centering\includegraphics[width=1.0\linewidth]{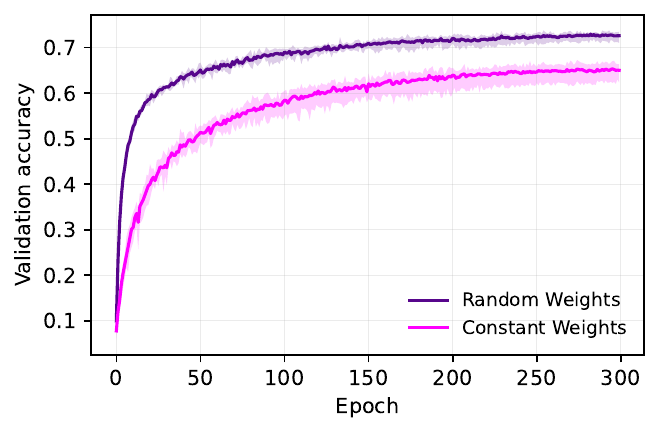}
    \caption{Effect of value-matrix initialization on CIFAR-10 validation accuracy.
    \textbf{Constant Weights} (magenta) correspond to \emph{identity initialization} of the matrices $V$ shared across all layers, while \textbf{Random Weights} (violet) correspond to \emph{independent Gaussian initialization} of $V$ in each layer.
    Curves show the mean validation accuracy over five independent training runs; shaded regions indicate the min. and max. accuracy across runs at each epoch.}
    \label{fig:random_vs_determ}
\end{figure}

\noindent\textbf{Key Role of Randomness.}
To legitimize our focus on random initialization, we empirically demonstrate that such initialization in deep self-attention stacks leads to better accuracy compared to weight-tied initialization, using a simplified Vision Transformer trained on CIFAR-10 \cite{krizhevsky2009learning}.
Our implementation is based on a public PyTorch reimplementation of Vision Transformer for CIFAR datasets,\footnote{\url{https://github.com/omihub777/ViT-CIFAR}} following the original ViT architecture~\cite{dosovitskiy2021imageworth16x16words} and using AutoAugment data augmentation~\cite{cubuk2018autoaugment_blog}.
The model consists of $128$ self-attention layers with embedding dimension $d=96$ and RMS normalization. All models are trained for $300$ epochs with batch size $128$ and linear learning-rate warmup over $8$ epochs. Results are averaged over $5$ random seeds.
The key and query matrices $K$, $Q$ are initialized once, shared identically across all layers, and remain fully trainable throughout optimization. We consider a single-head attention architecture without any feed-forward (MLP) layers, so that each block consists solely of attention followed by normalization, exactly as in \eqref{eq:discrete_time}. The model is trained end-to-end using the standard cross-entropy loss.

We compare two initialization for the value matrices $V$:
{\bf (i) constant identity} initialization, where all layers share the same value matrix initialized as the identity, and
{\bf (ii) independent Gaussian} initialization, where each layer’s value matrix is initialized independently with i.i.d. Gaussian entries of standard deviation $\sigma=1$.
Other architectural choices, hyperparameters, and training settings are fixed.
Despite identical architectures and fully trainable parameters, identity-initialized $V$ lead to a degradation in validation accuracy. This indicates that independence at initialization has a significant impact on the training dynamics of deep attention stacks. Note that a similar observation is made by \citet[][Fig.~23]{he2024simplifying} in a language task. %

\noindent\textbf{Numerical Verification of the Phase Transition.}
We numerically investigate the phase transition predicted by Thm.~\ref{th:phase_transition} by directly simulating the \emph{discrete Transformer dynamics}~\eqref{eq:discrete_time}. Importantly, no time-discretization of the limiting SDE \eqref{eq:transformer_sde} is used; all experiments are performed with the original discrete-time model, rather than discretizing the continuous-time stochastic differential equation using methods like Euler-Maruyama. This distinction is crucial, as the properties we establish for the continuous-time dynamics, particularly with respect to behavior at infinity, are not guaranteed to hold in the discrete case.%

We consider $N=2$ tokens and dimensions $d \in \{4,\ldots,10\}$.
For each pair $(\beta,d)$, we simulate $40{,}000$ independent trajectories,
initialized uniformly on the unit sphere.
The discrete dynamics~\eqref{eq:discrete_time} are simulated with $L=50{,}000$
and $T=500$.
At each layer, the value matrix $V_n$ is independently re-initialized
with i.i.d.\ Gaussian entries of variance $\sigma^2=1$,
while the key and query matrices are fixed and equal to the identity,
$K=Q=\operatorname{Id}(d)$.
Clustering is detected at the final layer using inner-product thresholds:
a trajectory is classified  as \emph{antipodal} if
$\langle X_T^1, X_T^2\rangle \le -1+\varepsilon$,
with $\varepsilon=10^{-3}$.
Empirical probabilities are obtained by averaging over all trajectories.

Fig.~\ref{fig:phase_trans2d} shows the empirical probability of antipodal
convergence as a function of $\beta$ for each dimension $d$.
A sharp transition is observed near the theoretical boundary
$\beta_c(d)=\tfrac12\cosh^{-1}(d-2)$ predicted by the continuous-time analysis.
This demonstrates that the phase transition derived for the limiting SDE
accurately describes the long-time behavior of the \emph{discrete}
Transformer dynamics. We checked that the results are not sensitive to varying the time horizon~$T$ (see App.~\ref{subsec:phase_trans_stability} for the details).

\begin{figure}[tbp]%
    \centering
    \includegraphics[width=\linewidth]{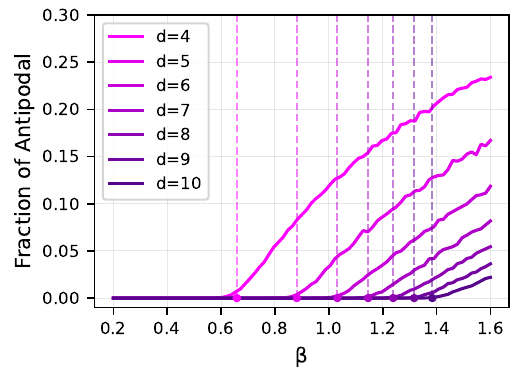}
    \caption{\textbf{Noise-induced phase transition for two tokens.}
    Empirical probability of antipodal convergence as a function of $\beta$
    for dimensions $d\in\{4,\ldots,10\}$.
    Each point is estimated from $40{,}000$ independent trajectories of
    the discrete dynamics~\eqref{eq:discrete_time}
    with $L=100$, the time horizon $T=500$, and $\sigma=1$.
    The dashed curve shows the theoretical phase boundary
    $\beta_c(d)=\tfrac12\cosh^{-1}(d-2)$.}
    \label{fig:phase_trans2d}
\end{figure}

\noindent\textbf{Multiple Tokens Case.}
We extend the numerical study of the antipodal clustering regime to systems with
$N\ge3$ tokens.
All experiments use a fixed depth $L=100$ and time horizon $T=500$, and are based on
$10{,}000$ independent trajectories per parameter setting.
For $d=4$, we run simulations on a uniform grid $\beta\in[0,8]$ with step size $0.5$
and varying $N$, and record the empirical probability that the terminal configuration
contains at least one antipodal pair.
As shown in Fig.~\ref{fig:antipodal_vs_beta}, this probability is getting strictly
positive across different values of $N$, indicating that antipodal structure is not
specific to the two-token case.
Beyond mere occurrence, we emphasize \emph{persistence}. For large $\beta$, we observe that clustering typically occurs well before $T=500$, after which the fraction of trajectories exhibiting antipodal structure remains stable over long times (Fig.~\ref{fig:multi_token_stability}), indicating long-lived antipodal states of the noisy dynamics. The stability of the antipodal configuration for more than 2 tokens remains an open question, though we conjecture that it is indeed stable, rather than merely meta-stable. %

\noindent\textbf{Numerical verification of noise-induced threshold.}
We illustrate the noise-induced clustering transition predicted by
Prop.~\ref{lem:hybrid_clustering} using simulations of the discrete hybrid dynamics
\eqref{eq:hybrid_discrete}.
Fig.~\ref{fig:noise_ind} shows a phase diagram in the $(\beta,\varepsilon)$ plane
obtained from $10^3$ independent trajectories with dimension $d=3$, depth $L=100$,
and final time $T=50$.
For each pair $(\beta,\varepsilon)$, we compute the empirical fractions of
trajectories converging to a single cluster and to an antipodal configuration.
The heatmap displays the \emph{maximum} of these two fractions, with color indicating
the dominant outcome (magenta: single cluster, violet: antipodal).
The dashed curve $\varepsilon=\sqrt{2e^{-\beta}}$ is the theoretical threshold
derived in Prop.~\ref{lem:hybrid_clustering}.
A sharp transition is observed across this curve, confirming that the analytically
predicted threshold accurately captures the clustering behavior of the discrete
dynamics.

\noindent\textbf{Rate of convergence and metastable behavior.}
While a detailed analysis of metastability and convergence rates is beyond the scope of the present work, we briefly comment on the qualitative behavior observed numerically.
For small values of the inverse temperature $\beta$, in regimes where convergence to a single-cluster configuration dominates, the observed convergence rate is comparable to that of the corresponding deterministic dynamics (i.e., the deterministic specialization of~\eqref{eq:hybrid_discrete} with $\varepsilon=0$, $\Att_{\beta}=\Att_{\beta}^{\mathrm{S}}$, and $Q^\top K=\operatorname{Id}(4)$).
In contrast, for larger values of $\beta$, the stochastic dynamics exhibits a markedly faster convergence.
Overall, within the parameter ranges considered here, we do not observe evidence of metastable behavior in the stochastic setting, in the sense of~\citet{geshkovski2024dynamic}.
Convergence plots illustrating these regimes are provided in App.~\ref{subsec:metastability}.

\begin{figure}[h]
    \centering
    \includegraphics[width=\linewidth]{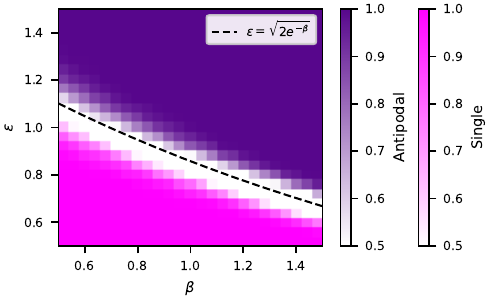}
    \caption{\textbf{Noise-induced clustering threshold.}
Dominant clustering outcome of the discrete hybrid dynamics in the $(\beta,\varepsilon)$
plane.
Colors encode the maximum of the empirical fractions of single-cluster (magenta) and
antipodal (violet) configurations.
Dashed curve $\varepsilon=\sqrt{2e^{-\beta}}$: theoretical threshold from
Prop.~\ref{lem:hybrid_clustering}.}
    \label{fig:noise_ind}
\end{figure}

\vspace{-1em}
\section*{Conclusion}

In this paper, we studied deep Transformer dynamics in a regime where stochasticity arises
\emph{intrinsically} from standard random initialization of value matrices.
By deriving a diffusion limit under RMS normalization, we showed that this
initialization-induced noise is not a negligible perturbation but a structural
component of signal propagation.
In particular, intrinsic noise qualitatively changes clustering behavior,
unlocking antipodal configuration regimes that are provably
inaccessible in deterministic attention flows, and leads to sharp phase transitions governed by the attention temperature and dimension. 
\\
A primary direction for future work is to extend the proof of antipodal clustering to the multi-token setting ($N \ge 3$), as suggested by our numerical experiments.
In addition, our analysis relies on a simplified architecture, with randomness isolated in the value matrices, serving as a first step toward a complete stochastic theory. Future research directions include extending this framework to capture the effects of random key and query matrices, incorporating feedforward blocks, and analyzing the implications of these initialization dynamics during training.

\section*{Impact statement}
This paper analyzes theoretical properties of deep Transformers where stochasticity arises from random initialization of value matrices. 
We do not foresee any specific ethical or societal implications arising directly from this work.

\bibliography{refs}
\bibliographystyle{icml2026}

\newpage
\appendix
\onecolumn
\section{Additional Figures}
\begin{figure}[h]
    \centering
    \includegraphics[width=\textwidth]{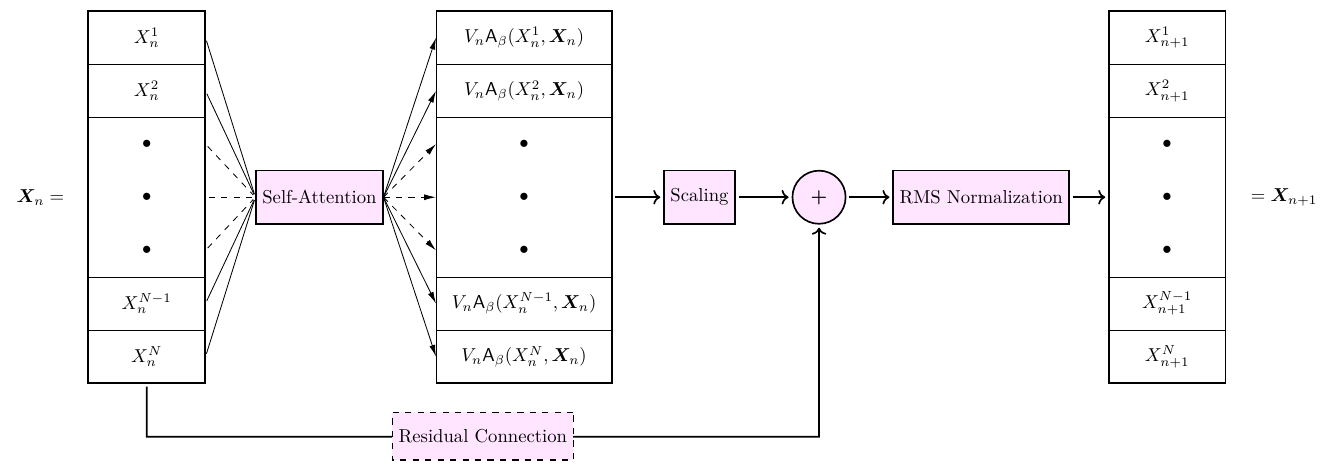}
    \caption{One layer of the Deep Transformer with added normalization and diffusion scaling ($\sim 1/\sqrt{L}$). The diagram highlights that self-attention acts globally across all tokens. The residual connection is shown with a dashed outline to indicate that it represents a mode of connectivity rather than an operation.}
    \label{fig:layer_schematic}
\end{figure}

\begin{figure}[H]
    \centering
    \includegraphics[width=.4\linewidth]{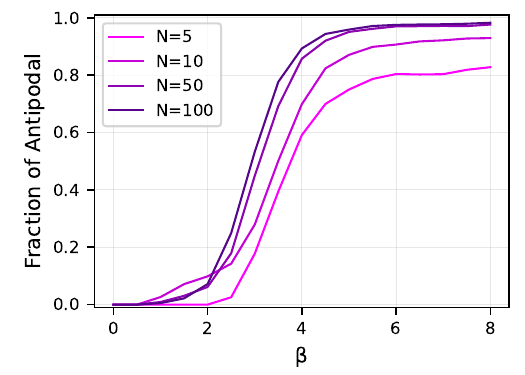}
    \caption{
    Empirical probability of observing \textbf{at least one} antipodal pair in the terminal
    configuration as a function of $\beta$, for multiple values of $N$ at dimension $d=4$.
    Simulations are performed on a uniform grid $\beta\in[0,8]$ with step size $0.5$,
    using $10{,}000$ independent trajectories per parameter setting and a fixed time
    horizon $T=500$.
    }
    \label{fig:antipodal_vs_beta}
\end{figure}

\begin{figure}[H]
    \centering
    \includegraphics[width=.4\linewidth]{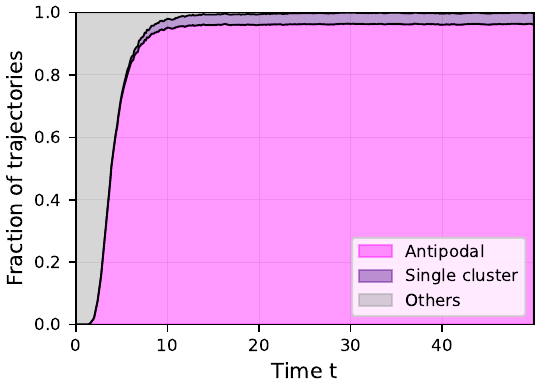}
    \caption{Time evolution of clustering fractions for a multi-token system ($d=4$, $N=50$) at inverse temperature $\beta=5$. 
    All curves are obtained by averaging over $2{,}000$ independent trajectories and are shown up to time horizon $T=50$.}
    \label{fig:multi_token_stability}
\end{figure}

\section{Continuous Time Limit}\label{app:A}
First, we fix the basic notion of weak convergence used throughout this appendix (Appendix~\ref{app:A1}). We then establish the continuous-time limit of the discrete Transformer dynamics under diffusion scaling, proving weak convergence to the limiting stochastic differential equation stated in Theorem~\ref{thm:ctl_transformer} (Appendix~\ref{app:A3}). Finally, we show that the limiting dynamics preserve the spherical constraint almost surely, so that trajectories remain on the unit sphere for all times (Lemma~\ref{lemma:sphere_supported}). At the end, we provide the proofs of Lemmas~\ref{lemma:strong_conv} and~\ref{lemma:dif_limit_interp}.

\subsection{Notation Convention}
The space of infinitely smooth functions from $X$ to $Y$ is denoted by $C^{\infty}(X, Y).$ Here we deal with $(\mathbb{R}^d)^N$--valued diffusion processes and to simplify the notation, we treat $\bm{X}\in(\mathbb{R}^d)^N$ as column vector $\bm{X}=\begin{bmatrix}
    X^1
    \\
    \vdots\\ X^N
\end{bmatrix},$ where $X^i\in\mathbb{R}^d$ is a column vector as well. Vector calculus notation  for the gradient $\nabla\phi(\bm{X})$ and Hessian $\nabla^2 \phi(\bm{X})$ changes accordingly.

\subsection{Background on Martingale Problem and Weak Convergence}\label{app:A1}
We do not aim at a complete or self-contained exposition of the probabilistic framework used in this work. Instead, we follow the general setting developed by Stroock and Varadhan and only recall the basic definitions and notation that will be used throughout the paper. For a detailed and systematic treatment, we refer the reader to the classical references on weak convergence and stochastic processes \cite{stroock2007multidimensional, billingsley2013convergence, bogachev2018weak}.

Consider the \textit{canonical probability space} \(\Omega = C\bigl(\mathbb{R}_{+};\mathbb{R}^D\bigr)\) of continuous functions taking values in $\mathbb{R}^D$, equipped with the metric
\begin{equation}\label{eq:metric}
d(\omega, \omega') \vcentcolon= \sum_{n=0}^{+\infty}\frac{1}{2^n}
\frac{\sup_{0\le t\le n}\|x(t,\omega)-x(t,\omega')\|}
{1 + \sup_{0\le t\le n}\|x(t,\omega)-x(t,\omega')\|},
\end{equation}
where $x(t,\omega)\vcentcolon=\omega(t)$ denotes the position of the trajectory $\omega$ at time $t$.
With this metric, $(\Omega,d)$ is a Polish\footnote{That is, complete and separable; see \citet{bogachev2018weak}.} space.
Endowed with its Borel $\sigma$-algebra $\mathcal{B}(\Omega)$, this is the underlying measurable space on which we work.

Let $\{\mu_n\}_{n=1}^{\infty}\subset\mathcal{P}(\Omega)$ be a sequence of probability measures.
We say that $\mu_n$ \textit{converges weakly} to $\mu\in\mathcal{P}(\Omega)$, and write
\[
\mu_n \xrightarrow{w} \mu,
\]
if for every bounded continuous function $f\colon\Omega\to\mathbb{R}$,
\[
\int f(\omega)\,\mu_n(d\omega)\;\longrightarrow\;
\int f(\omega)\,\mu(d\omega),
\qquad\text{as } n\to\infty.
\]

If $X_n$ and $X$ are $\Omega$-valued random variables such that
\(
\mu_n = \operatorname{Law}(X_n), \, 
\mu = \operatorname{Law}(X),
\)
then the above convergence is called \emph{convergence in distribution}. In this case, we write
\begin{equation}\label{eq:def_of_conv_in_distribution}
X_n \xrightarrow{d} X.
\end{equation}

\paragraph{Martingale problem.}
We recall the notion of the martingale problem in the sense of Stroock and Varadhan, adapted to the present setting.
We emphasize that, although one may formulate the martingale problem together with an explicit filtration,
in this work we do \emph{not} specify a filtration a priori.
Instead, throughout the paper we implicitly work with the natural filtration generated by the coordinate process.

Let $E=\mathbb{R}^D$ and let
\(\Omega\)
be the canonical path space. Let $L$ be a (second-order) diffusion operator on $E$ with domain
$\operatorname{dom}(L)\subset C_b(E)$,
and let $\nu\in\mathcal{P}(E)$ be a prescribed initial distribution.

\begin{definition}
A probability measure $P\in\mathcal{P}(\Omega)$ is said to be a
\emph{solution of the martingale problem for $(L,\nu)$}
if
\begin{enumerate}
\item the initial marginal of $P$ is $\nu$, that is
\[
P\circ X_0^{-1} = \nu;
\]
\item for every $f\in\operatorname{dom}(L)$, the process
\[
M_f(t)
\coloneqq
f(X_t) - f(X_0) - \int_0^t Lf(X_s)\,ds,
\qquad t\ge 0,
\]
is a $P$-martingale with respect to the natural filtration.
\end{enumerate}
\end{definition}

In this case, we write
\(
P \in \mathrm{MP}(L,\nu).
\)
\subsection{Continuous-Time Limit of Deep Stochastic Transformers}\label{app:A3}

\begin{proof}[Proof of Theorem~\ref{thm:ctl_transformer}]

First, we carry out some preliminary steps. Note that, to establish weak convergence on $\Omega$, it suffices to prove it on an arbitrary but fixed finite time horizon $T\in\mathbb{R}_{+}$. This follows directly from the form of the metric \eqref{eq:metric}. We therefore fix a time horizon $T$. We decompose the proof into the following steps:
\begin{enumerate}
    \item Linearize the dynamics~\eqref{eq:discrete_time} and show that the linearized dynamics converges to the nonlinear one;
    \item Show that the limit of the linearized dynamics is given by~\eqref{eq:transformer_sde};
    \item Transfer the convergence result from the linearized dynamics to the original nonlinear dynamics.
\end{enumerate}
Now we implement the steps outlined above. For this purpose, we introduce an auxiliary function
$\Tilde{\Att}_{\beta}\colon \mathbb{R}^{d}\times\left(\mathbb{R}^{d}\right)^N\to \mathbb{R}^d$
such that
$\Tilde{\Att}_{\beta}\in C^{\infty}\left(\mathbb{R}^{d}\times\left(\mathbb{R}^{d}\right)^N, \mathbb{R}^d\right)$
and
\[
\Tilde{\Att}_{\beta}(X, \bm{X}) \vcentcolon=\begin{cases}
    \Att_{\beta},\text{ if } \|X\|, \|X^i\|\le 2\text{ for } 1\le i\le N,
    \\
    0 , \text{ if } \|X\|,\|X^i\|\geq 4.
\end{cases}
\]
Clearly, this function is not unique, coincides with $\Att_{\beta}$ on
$\mathbb{S}^{d-1}\times\left(\mathbb{S}^{d-1}\right)^N$, and is globally bounded.
We note that any result proved for the recurrent relation below
\begin{equation}\label{eq:discrete_time_modified}
    X^i_{n+1}
    =
    \frac{
        X^i_n +\tfrac{1}{\sqrt{L}} V_{n+1}\,\Tilde{\Att}_{\beta}(X^i_n,\bm{X}_n)
    }{
        \left\| X^i_n + \tfrac{1}{\sqrt{L}}V_{n+1}\,\Tilde{\Att}_{\beta}(X^i_n,\bm{X}_n) \right\|
    },
     n \in \mathbb{N}_0, \bm{X}_0\in(\mathbb{S}^{d-1})^N,
\end{equation}
also holds for the recurrent relation \eqref{eq:discrete_time}, since the corresponding solutions coincide.
At this stage, the introduction of the modified function merely allows us to state results on the entire space
without additional comments, while they automatically apply to the compact subset of interest.
\paragraph{Step 1. [Linearization]} 
We note that the following expansion is valid for $X\in\mathbb{S}^{d-1}$ and
$\bm{X}\in\left(\mathbb{S}^{d-1}\right)^{N}$:
\[
\frac{
    X + \delta V\Tilde{\Att}_\beta(X, \bm{X})
}{\left\|
    X + \delta V\Tilde{\Att}_\beta(X, \bm{X})
\right\|} = 
X + \delta P^{\perp}_X[V\Tilde{\Att}_\beta(X, \bm{X})] + \delta^2 G(X, V\Tilde{\Att}_\beta(X, \bm{X})) + \delta^3 R(X, V\Tilde{\Att}_\beta(X, \bm{X})),
\]
where $G$ is given by
\[G(x, y) \vcentcolon =  \tfrac{3}{2}\langle x, y\rangle^2 x - \tfrac{1}{2}\langle y, y\rangle x - \langle x, y \rangle y,\]
and $R(X, V\Tilde{\Att}_\beta(X, \bm{X}))$ is a uniformly bounded remainder term.
Discarding the remainder, we obtain a \textit{truncated} discrete-time dynamics
$\{Y^i_n\}_{n\in\mathbb{N}_0}\subset\mathbb{R}^d$ with $\delta = \tfrac{1}{\sqrt{L}}$
in place of \eqref{eq:discrete_time_modified}:
\begin{equation}\label{eq:trucated_dynamics_from_CTL}
    Y^i_{n+1} = Y^i_n + \tfrac{1}{\sqrt{L}}P^\perp_{Y^i_n}\left[V_{n+1}\Tilde{\Att}_\beta (Y^i_n, \bm{Y}_n)\right] + \tfrac{1}{L}G(Y^i_n, V_{n+1}\Tilde{\Att}_\beta (Y^i_n, \bm{Y}_n)), \quad i \in \{1, \ldots, N\},
\end{equation}
and we assume that $\bm{Y}_0=\bm{X}_0.$
Note that, for matrices $\{V_n\}$ with uniformly bounded entries, the functional families
$\{G(X, V_n\Tilde{\Att}_{\beta}(X, \bm{X}))\}_{n=0}^{\infty}$ and
$\{P^{\perp}_{X}[V_n \Tilde{\Att}_{\beta}(X,\bm{X})]\}_{n=0}^{\infty}$
are uniformly Lipschitz\footnote{That is, all functions in the family are $M$-Lipschitz for a common constant $M\in\mathbb{R}_{\geq 0}$.},
with Lipschitz constants $M_G$ and $M_P$, respectively.
Then, a standard Grönwall-type argument implies that, for any fixed time horizon $T$,
\begin{equation}\label{eq:disc_squre}
\max\limits_{1\le i \le N}\max\limits_{1\le k\le \lfloor T L\rfloor}\mathbb{E}\left\|
    X^i_k - Y^i_k
\right\|^2\to 0 \quad \text{as } L \to \infty.
\end{equation}
We carry out this argument below. Indeed, consider the difference
$\Delta_n^i \vcentcolon= X^i_n - Y^i_n$:
\begin{multline*}
    \Delta_{n+1}^i = \Delta_n^i + \tfrac{1}{\sqrt{L}}\left(
    P^\perp_{X^i_n}\left[V_{n+1}\Tilde{\Att}_\beta (X^i_n, \bm{X}_n)\right]-
    P^\perp_{Y^i_n}\left[V_{n+1}\Tilde{\Att}_\beta (Y^i_n, \bm{Y}_n)\right]
    \right)
    + \\
    + \tfrac{1}{L}\left(G(X^i_n, V_{n+1}\Tilde{\Att}_\beta (X^i_n, \bm{X}_n)) - G(Y^i_n, V_{n+1}\Tilde{\Att}_\beta (Y^i_n, \bm{Y}_n))\right) + \tfrac{1}{L^{3/2}} R^i_n,
\end{multline*}
where
$R^i_n = R(X^i_n, V^{n+1}\Tilde{\Att}_{\beta}(X^i_n, \bm{X}_n))$.
Moreover, we have
\begin{multline*}
    \Delta_{n+1}^i =
    \tfrac{1}{\sqrt{L}}\sum\limits_{k=1}^n \left(
    P^\perp_{X^i_k}\left[V_{k+1}\Tilde{\Att}_\beta (X^i_k, \bm{X}_k)\right]-
    P^\perp_{Y^i_k}\left[V_{k+1}\Tilde{\Att}_\beta (Y^i_k, \bm{Y}_k)\right]
    \right) + \\
    + \tfrac{1}{L}\sum\limits_{k=1}^n \left(G(X^i_k, V_{k+1}\Tilde{\Att}_\beta (X^i_k, \bm{X}_k)) - G(Y^i_k, V_{k+1}\Tilde{\Att}_\beta (Y^i_k, \bm{Y}_k))\right)
    + \tfrac{1}{L^{3/2}}\sum\limits_{k=1}^n R^i_k.
\end{multline*}
We also define the difference between all tokens as
$\Delta_n \vcentcolon= \bm{X}_n - \bm{Y}_n$.
Note that, for the Euclidean norms in $\mathbb{R}^{d\times N}$ and $\mathbb{R}^d$, we have the following equality
\begin{equation}\label{eq:Cauchy--Schwarz}
\|\Delta_n\|^2 = \sum\limits_{k=1}^n \|\Delta_n^k\|^2.
\end{equation}
Now, applying the Cauchy--Schwarz inequality and using independence of the entries of $\{V_n\}$ for different values of $n$ (discrete Itô isometry), we obtain
\[
\mathbb{E}\|\Delta_{n+1}^i\|^2
\le 3\left(\tfrac{M_P^2}{L}+ \tfrac{M_G^2}{L}\right)\sum\limits_{k=1}^n\mathbb{E}\|\Delta_k\|^2 + \tfrac{3R^2}{L^2},
\]
where $R = \max\limits_{i, n} R^i_n$.
Summing the inequality above and applying the property of the norm~\eqref{eq:Cauchy--Schwarz}, we obtain
\[
\mathbb{E}\|\Delta_{n+1}\|^2
\le 3N\left(\tfrac{M_P^2}{L}+ \tfrac{M_G^2}{L}\right)\sum\limits_{k=1}^n\mathbb{E}\|\Delta_k\|^2 + \tfrac{3R^2N}{L^2}.
\]
Thus, by discrete Grönwall inequality we have
$\mathbb{E}\|\Delta_{n+1}\|^2 \le \frac{3R^2N}{L^2}\exp\left(3N(M_P^2+M_G^2)T\right)$
for $n+1 \le \lfloor LT\rfloor$.
Since $N$, $R$, $T$, $M_P$, and $M_G$ are treated as fixed constants, this yields the convergence~\eqref{eq:disc_squre}.

Clearly, the same uniform approximation result holds for the linear interpolations
of the sequences $\{X^i_n\}_{n\in\mathbb{N}_0}\subset\mathbb{R}^d$ and
$\{Y^i_n\}_{n\in\mathbb{N}_0}\subset\mathbb{R}^d$, denoted respectively by
$\bm{X}_L(t)$ and $\bm{Y}_L(t)$:
\begin{equation}\label{eq:strong_conv}
\max\limits_{1\le i \le N} \sup\limits_{0\le t\le T}
\mathbb{E}\|X^{i}_L(t) - Y^i_L(t)\|^2 \to 0 \quad \text{as } L\to\infty.
\end{equation}
We now proceed to the second step.

\paragraph{Step 2. [Weak Convergence]}  This step crucially relies on the work of \citet{watanabe1983diffusion}, which establishes convergence in terms of the martingale problem together with the uniqueness of the solution. Watanabe’s theorem reduces the proof to identifying the limiting process generator via the first two conditional moments of the one-step increment.

Indeed, according to Theorem~1 therein, we can explicitly compute the coefficients of the generator $R$ corresponding to the solution of the limiting martingale problem $\mathrm{MP}(R, \operatorname{Law}(\bm{X}_0))$.

Assumptions~I)--VI) of \citet{watanabe1983diffusion} are satisfied under our setting for the following reasons.
First, all coefficient functions involved in the scheme are infinitely differentiable; in particular, they are more regular than required, since the theorem only assumes a finite number of bounded derivatives.
Second, the driving random matrices $\{V_n\}_{n\ge 1}$ form an i.i.d.\ sequence, which is stronger than the stationarity assumptions imposed in the general framework.
Third, all derivatives of the coefficients are locally bounded, a direct consequence of the smoothness of $\Tilde{\Att}_{\beta}$ together with the compactness of its support.
Finally, the mean drift condition holds trivially, since the noise has zero mean due to the independence and centering of $\{V_n\},$ see Assumption~\ref{ass:bounded_noise}.

Expectations for the the Assumption VII) could be computed explicitly using the following identities: 
\begin{equation}
\mathbb{E}\langle x, Vy\rangle^2 = \sigma^2 \cdot \|x\|^2 \cdot \|y\|^2,\quad
\mathbb{E}\langle Vy, Vy\rangle = \sigma^2 \cdot d \cdot \|y\|^2,\quad
\mathbb{E}\langle x, Vy\rangle Vy = \sigma^2 \cdot \|y\|^2 \cdot x \in \mathbb{R}^d.
\end{equation}
Treating entire sequence of the tokens as a vector in $(\mathbb{R}^d)^N,$ we get that 
\[
c(\bm{X}) = \sigma^2\frac{1-d}{2}\begin{bmatrix}
    \|\Tilde{\Att}_{\beta}(X^1,\bm{X})\|^2 X^1
    \\
    \vdots
    \\
    \|\Tilde{\Att}_{\beta}(X^N,\bm{X})\|^2 X^N
\end{bmatrix} \in (\mathbb{R}^d)^N, \quad b_{k\ell}(\bm{X}) = 0,
\]
Computing of $A (\bm{X}) = (a_{k\ell}(\bm{X})_{1\le k,\ell\le d\cdot N}$ is more delicate, because we a have a diffusion in $(\mathbb{R}^d)^N$:
\[
A(\bm{X}) = \mathbb{E}\underbrace{\begin{bmatrix}
    P^{\perp}_{X^1}\left[ V \tilde{\Att}_{\beta} (X^1,\bm{X})
    \right]
    \\
    \vdots
    \\
    P^{\perp}_{X^N}\left[ V \tilde{\Att}_{\beta} (X^N,\bm{X})\right]
\end{bmatrix}}_{\in\mathbb{R}^{d\times N}} \begin{bmatrix}
    \left[P^{\perp}_{X^1}\left[ V \tilde{\Att}_{\beta} (X^1,\bm{X})
    \right]\right]^{T}
    &
    \cdots
    &
    \left[P^{\perp}_{X^N}\left[ V \tilde{\Att}_{\beta} (X^N,\bm{X})\right]\right]^{T}
\end{bmatrix}.
\]
A direct identification shows that the generator $R$, defined by
\[
R\phi(\bm{X}) = \langle c(\bm{X}),\nabla \phi(\bm{X}) \rangle
+ \tfrac{1}{2}\operatorname{Tr}\!\left[A(\bm{X}) \nabla^2 \phi (\bm{X})\right],
\]
coincides with the generator $R'$ associated with the following system:
\begin{equation}\label{eq:trunc_transformer_SDE}
    dY^{i}_t
    =
    \frac{(1-d)\sigma^{2}}{2}\,
    \|\Tilde{\Att}_{\beta}(Y^{i}_t, \bm{Y}_t)\|^{2}\, Y^{i}_t\,dt+ \sigma\,P^{\perp}_{Y^{i}_t}\!\left[dW_t\,
    \Tilde{\Att}_{\beta}(Y^{i}_t, \bm{Y}_t)\right], \quad \bm{Y}_0=\bm{X}_0,
    \qquad i=1,\ldots,N,
\end{equation}
treated as an $(\mathbb{R}^d)^N$ valued diffusion process. Moreover, we enjoy the fact that $\mathrm{MP}(R', \operatorname{Law}(\bm{Y}_0))$ is a weak solution of \eqref{eq:trunc_transformer_SDE}, see for instance Theorem~1 in \citet{kurtz2010equivalence} (local boundedness and measurability requirements are satisfied). As a bonus, we get that this weak solution is distributionally unique, because such is solution of the martingale problem. 

By a standard argument, which we reproduce in Lemma~\ref{lemma:sphere_supported},
any solution of \eqref{eq:trunc_transformer_SDE} remains almost surely supported on
the unit sphere. Consequently, along such trajectories one has
$\Tilde{\Att}_{\beta} = \Att_{\beta}$ in \eqref{eq:trunc_transformer_SDE}.
This completes the identification of the limit and shows that the linearized
dynamics converges to \eqref{eq:transformer_sde}.

\paragraph{Step 3. [Passage to the Limit]}
We apply a Slutsky-type argument; see Theorem~6b) in~\citet{ferguson2017course},
which extends without modification to Polish spaces.
Indeed, note that
\begin{enumerate}
    \item convergence~\eqref{eq:strong_conv} implies convergence in probability:
    $\bm{X}_L - \bm{Y}_L \xrightarrow{\mathbb{P}} 0$;
    \item according to Step~2, we have $\bm{Y}_L \xrightarrow{d} \bm{Y}$.
\end{enumerate}
It follows that $(\bm{X}_L - \bm{Y}_L)+\bm{Y}_L \xrightarrow{d} \bm{Y}$, which completes the proof of the theorem.
\end{proof}
\begin{proof}[Proof of Lemma~\ref{lemma:sphere_supported}]
The result can be easily deduced by applying Itô's formula to $\|Y_t\|^2 = \langle Y_t, Y_t\rangle$. We immediately obtain
\[
d\langle Y^i_t, Y^i_t\rangle = 2\langle Y^i_t, d Y^i_t\rangle + \langle dY^i_t, d Y^i_t\rangle.
\]
Using the fact that $\langle P^{\perp}_x[y], x\rangle=0$ for all $y\in\mathbb{R}^d$ and performing some simplifications, we obtain
\[
d\|Y^i_t\|^2 = \sigma^2\|\Att_{\beta}(Y^i_t, \bm{Y}_t)\|^2(\|Y^i_t\|^2 - d)(\|Y^i_t\|^2 -1)\, dt,
\]
which implies that $Y^i_{t+s}\in \mathbb{S}^{d-1}$ for all $s>0$ as long as $Y^i_t\in \mathbb{S}^{d-1}$. Since we have the initial condition $\|Y^i_0\|=1$, we immediately obtain the desired statement.
\end{proof}

\begin{proof}[Proof of Lemma~\ref{lemma:strong_conv}]
Similarly to the first step of the proof of Theorem~\ref{thm:ctl_transformer}, for $x\in\mathbb{S}^{d-1}$ and $y\in\mathbb{R}^d$ we consider the following function:
\[
F_\delta(x, y) = (x+\delta y)/\|x+\delta y\| = \frac{(x+\delta y)}{\sqrt{\langle x, x\rangle + 2\delta\langle x, y\rangle + \delta^2\langle y, y\rangle}}.
\]
Trivially, we obtain
$$
F_\delta(x, y) = (x+\delta y)\left(1 + 2\delta A(x, y) + \delta^2 B(x, y)\right)^{-1/2},
$$
where we denote
\[
A(x, y) \vcentcolon= \langle x, y\rangle, \quad B(x, y) \vcentcolon= \langle y, y\rangle.
\]
By directly applying the Taylor expansion and assuming that $x \in \mathbb{S}^{d-1}$ and $y$ belongs to a compact subset $\mathcal{K} \subset \mathbb{R}^d$, we obtain a more convenient form for sufficiently small $\delta > 0$:
\[
F_\delta(x, y) = x + \delta P^\perp_x[y] + \delta^2\left(\frac{3}{2}A^2x - \frac{1}{2}Bx - Ay\right)+\delta^3R(x, y),
\]
where $\|R(x, y)\| \leq C$. We note that in our case, $\delta = \frac{1}{L}$.  This corresponds to a deterministic scaling regime, in contrast with the diffusive scaling $\delta = L^{-1/2}$ used in Theorem~\ref{thm:ctl_transformer}. Also, note that we are working on the product of the unit spheres. Thus, the remainder of order $1/L^2$ is bounded. This means that for every token number $i \leq N$, we have
\[
X_{n+1}^i - X^i_n = \tfrac{1}{L} P^{\perp}_{X^i_n}\left[\Att_{\beta}(X^i_n, \bm{X}_n)\right] + \bar{o}\left(\tfrac{1}{L}\right),
\]
where the remainder converges to zero uniformly for all $n \leq L$. A standard Grönwall argument implies the uniform convergence to the solution of the following problem:
\[
    \begin{cases}
        \tfrac{dY_t^i}{dt} = P^{\perp}_{Y_t^i}\left[\Att_{\beta}(Y_t^i, \bm{Y}_t)
        \right],\quad\text{ for } i\le N,
        \\
        \bm{Y}_0=\bm{X}_0.
    \end{cases}
\]
Thus, we have obtained the required statement.
\end{proof}

\begin{proof}[Proof of Lemma~\ref{lemma:dif_limit_interp}]
We note that for the dynamics \eqref{eq:hybrid_discrete}, we apply exactly the same technique as for Theorem~\ref{thm:ctl_transformer}. In this case, instead of the truncated dynamics \eqref{eq:trucated_dynamics_from_CTL}, we obtain the following dynamics:
\[
    Y^i_{n+1} = Y^i_n + \tfrac{\varepsilon v_{n+1}}{\sqrt{L}}P^\perp_{Y^i_n}\left[\Tilde{\Att}_\beta (Y^i_n, \bm{Y}_n)\right] + \tfrac{1}{L}\left[G(Y^i_n, \varepsilon v_{n+1}\Tilde{\Att}_\beta (Y^i_n, \bm{Y}_n))+ P^{\perp}_{Y^{i}_n}[\Tilde{\Att}_\beta (Y^i_n, \bm{Y}_n)]\right], \quad i \in \{1, \ldots, N\}.
\]
Then, for the linearized dynamics above and the recurrence relation \eqref{eq:hybrid_discrete}, we apply exactly the same argument. This leads to the result of the lemma.
\end{proof}
\section{Long-Time Behavior and Clustering}
\label{app:clustering}

\paragraph{Outline of the Analysis.} This section analyzes the long-time behavior of the overlap dynamics underlying clustering.  
We first derive a closed one-dimensional description of the overlap process and classify its boundary points using the scale–speed framework of \citet{karlin1981second} (Lemmas~\ref{lem:overlap_two_tokens}, \ref{lem:boundary_classification}, and~\ref{lemma:unattainability}).  
Based on this classification, we then prove the clustering and anti-clustering results by exploiting the martingale properties of the associated scale function, making explicit a classical implication of one-dimensional diffusion theory in our setting.

\subsection{Overlap Dynamics and Reduction to One Dimension}\label{app:overlap_diff}

We begin by observing that clustering events admit a simple geometric characterization in terms of pairwise inner products.  
Let $(Y^i_t)_{i=1}^N \subset \mathbb S^{d-1}$ be token trajectories solving the Transformer SDE \eqref{eq:transformer_sde}.
For any pair $(i,j)$, the squared distance and squared sum satisfy
\[
    \|Y^i_t - Y^j_t\|^2 = 2\bigl(1 - \langle Y^i_t, Y^j_t\rangle\bigr),
    \qquad
    \|Y^i_t + Y^j_t\|^2 = 2\bigl(1 + \langle Y^i_t, Y^j_t\rangle\bigr),
\]
since the tokens remain on the unit sphere.  
Consequently,
\[
    \|Y^i_t - Y^j_t\| \longrightarrow 0
    \quad\Longleftrightarrow\quad
    \langle Y^i_t, Y^j_t\rangle \longrightarrow 1,
\]
which corresponds to clustering, while
\[
    \|Y^i_t + Y^j_t\| \longrightarrow 0
    \quad\Longleftrightarrow\quad
    \langle Y^i_t, Y^j_t\rangle \longrightarrow -1,
\]
corresponding to convergence toward an antipodal configuration.  
Thus, clustering and anti-clustering events are entirely characterized by the long-time behavior of the overlap process
\[
    Z^{i,j}_t := \langle Y^i_t, Y^j_t\rangle \in [-1,1].
\]

For general $N$, applying It\^o's formula to $Z^{i,j}_t$ yields a one-dimensional stochastic differential equation whose coefficients depend on the remaining tokens through the attention weights.  
Hence the overlap dynamics are not closed in general.  
The situation becomes particularly tractable in the case $N=2$, where the overlap forms a closed Markov diffusion.

\begin{lemma}[Closed overlap dynamics for two tokens: unnormalized vs.\ self-attention]
\label{lem:overlap_two_tokens}
Assume $N=2$ and $K=Q=\operatorname{Id}(d)$. Let $(Y^1_t,Y^2_t)\in(\mathbb S^{d-1})^2$
solve \eqref{eq:transformer_sde} and define the overlap
\[
    Z_t := \langle Y^1_t, Y^2_t\rangle \in [-1,1].
\]
Then $(Z_t)_{t\ge0}$ is a one-dimensional diffusion. More precisely:

\smallskip
\noindent\textbf{(i) Unnormalized self-attention $\Att_\beta=\Att_\beta^{\mathrm U}$.}
There exists a scalar Brownian motion $B_t$ such that
\begin{equation}\label{eq:Z_SDE_U}
    dZ_t = \tfrac{\sigma^2}{4}(1-Z_t^2)\left(2(d-2)e^{\beta(1+Z_t)}-
Z_t(e^{2\beta}+e^{2\beta Z_t})
\right)\,dt + \frac{\sigma}{\sqrt{2}}(1-Z_t^2)\sqrt{e^{2\beta}+e^{2\beta Z_t}}\, dB_t.
\end{equation}
\smallskip
\noindent\textbf{(ii) Normalized self-attention $\Att_\beta=\Att_\beta^{\mathrm S}$.}
There exists a scalar Brownian motion $\widetilde B_t$ such that
\begin{equation}\label{eq:Z_SDE_S}
    dZ_t = \sigma^2\frac{(1-Z_t^2)}{(e^{\beta}+e^{\beta Z_t})^2}\left(2(d-2)e^{\beta(1+Z_t)}-
Z_t(e^{2\beta}+e^{2\beta Z_t})
\right)\,dt + \sigma\frac{(1-Z_t^2)}{e^{\beta}+e^{\beta Z_t}}\sqrt{e^{2\beta}+e^{2\beta Z_t}} \,d\widetilde B_t.
\end{equation}

\end{lemma}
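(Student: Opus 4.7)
My plan is to apply Itô's formula to the scalar process $Z_t=\langle Y^1_t, Y^2_t\rangle$ using \eqref{eq:transformer_sde} and then exploit the specialization $K=Q=\operatorname{Id}(d)$, $N=2$ to verify that the resulting drift and diffusion depend on the full state $(Y^1_t,Y^2_t)$ only through $Z_t$. Once this closure is established, the martingale part is a continuous local martingale with quadratic variation of the form $\phi(Z_t)^2\,dt$, and a standard representation theorem for continuous scalar martingales (possibly after an enlargement of the probability space) produces the driving Brownian motion $B_t$, yielding a scalar autonomous SDE.

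The structural input is that $A^i_t:=\Att_\beta(Y^i_t,\bm Y_t)$ lies in $\operatorname{span}(Y^1_t,Y^2_t)$ under $K=Q=\operatorname{Id}(d)$, so that every inner product encountered can be expressed as a function of $Z_t$ alone. In the USA case, $A^1=\tfrac12(e^\beta Y^1+e^{\beta Z}Y^2)$ and $A^2=\tfrac12(e^{\beta Z}Y^1+e^\beta Y^2)$, which immediately give $\|A^1\|^2=\|A^2\|^2=\tfrac14(e^{2\beta}+2Ze^{\beta(1+Z)}+e^{2\beta Z})$ and $\langle A^1, A^2\rangle=\tfrac14(2e^{\beta(1+Z)}+Z(e^{2\beta}+e^{2\beta Z}))$. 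The SA case follows from the identity $\Att^{\mathrm S}_\beta=\tfrac{2}{e^\beta+e^{\beta Z}}\Att^{\mathrm U}_\beta$, which rescales $A^i$ and hence the drift and diffusion by explicit factors depending only on $Z$.

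The Itô computation relies on two elementary brackets for the common matrix-valued noise $W$: $d[(dW a)_k,(dW b)_l]_t = \delta_{kl}\langle a,b\rangle\,dt$ and $d[\langle u,dW a\rangle,\langle v,dW b\rangle]_t = \langle u,v\rangle\langle a,b\rangle\,dt$. Writing $dY^i_t = \mu^i_t\,dt + \sigma\,\tilde u^i_t$ with $\tilde u^i_t = P^\perp_{Y^i_t}[dW_t A^i_t]$, one expands
\[dZ_t = \langle Y^2_t, dY^1_t\rangle+\langle Y^1_t, dY^2_t\rangle+\sigma^2\sum_{k=1}^d d[\tilde u^1_k,\tilde u^2_k]_t.\]
Self-adjointness of $P^\perp$ reduces the martingale piece to $\sigma(\langle p^1_t, dW_t A^1_t\rangle+\langle p^2_t, dW_t A^2_t\rangle)$ with $p^i_t=Y^{3-i}_t-Z_tY^i_t$, and expanding the projection corrections in the cross bracket produces the combinatorial coefficient $d-2+Z_t^2$ in front of $\langle A^1_t,A^2_t\rangle$ (arising as $d-1-1+Z^2$ from the isometric contribution $\sum_k\delta_{kk}=d$ and the three correction terms). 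Using $\|p^i\|^2=1-Z^2$ and $\langle p^1,p^2\rangle=-Z(1-Z^2)$, the martingale quadratic variation collapses to $\sigma^2(1-Z^2)\bigl(\|A^1\|^2+\|A^2\|^2-2Z\langle A^1,A^2\rangle\bigr)\,dt$.

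Substituting the closed-form expressions for $\|A^i\|^2$ and $\langle A^1,A^2\rangle$ into the drift and quadratic variation, and invoking the key algebraic identity $\|A^1\|^2-Z\langle A^1,A^2\rangle=\tfrac14(1-Z^2)(e^{2\beta}+e^{2\beta Z})$ (which factors out a $(1-Z^2)$), yields \eqref{eq:Z_SDE_U}; the SA case \eqref{eq:Z_SDE_S} then follows directly by rescaling. I expect the principal technical obstacle to be the combinatorial bookkeeping of the Itô brackets under the matrix-valued common noise, and in particular the appearance of the dimensional factor $d-2$: its origin is precisely the interplay between the full-dimensional $d$ from $\sum_k\delta_{kk}$ and the four tangent-space corrections induced by the spherical projection on both tokens. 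Everything else—identifying the drift, the diffusion, and the scalar driving Brownian motion—is then a routine algebraic simplification.
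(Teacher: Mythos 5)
Your proposal is correct and follows essentially the same route as the paper: applying Itô's formula to $Z_t=\langle Y^1_t,Y^2_t\rangle$, computing the drift contributions from both the radial drift in \eqref{eq:transformer_sde} and the cross bracket (yielding the $d-2+Z^2$ coefficient), reducing the martingale part via self-adjointness of $P^\perp$ to obtain the quadratic variation $\sigma^2(1-Z_t^2)\bigl(\|A^1\|^2+\|A^2\|^2-2Z_t\langle A^1,A^2\rangle\bigr)$, and then substituting the closed-form $N=2$ attention expressions. The paper organizes the diffusion coefficient via an explicit double sum over $dW^{kl}_t$ before normalizing, whereas you invoke the bracket identity $d[\langle u,dW a\rangle,\langle v,dW b\rangle]=\langle u,v\rangle\langle a,b\rangle\,dt$ directly, but the two computations are the same.
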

\begin{proof}
We write the overlap as
\[
    Z_t=\langle Y_t^1,Y_t^2\rangle,
\]
and apply It\^o's formula to the bilinear map $(x,y)\mapsto \langle x,y\rangle$.  We obtain the identity
\begin{equation}\label{eq:Ito_overlap_generic}
    dZ_t
    =
    \langle dY_t^1,\,Y_t^2\rangle
    +
    \langle Y_t^1,\,dY_t^2\rangle
    +
    \langle dY_t^1,\,dY_t^2\rangle .
\end{equation}
We now substitute the dynamics of $(Y_t^1,Y_t^2)$ from \eqref{eq:transformer_sde}. We recall that 
\[
 P^{\perp}_{x}\left[dW_t f(x)\right] = \sum\limits_{k,l=1}^d c_{kl}(x)dW_t^{kl},\quad c_{kl}(x) =f_l (x) (e_k - x_k x)\in\mathbb{R}^d,
\]
where $e_k\vcentcolon = \big[
    0 \; \ldots \;\underbrace{1}_{k}\ldots\;0\big]^{T}$ and $f\in\mathbb{R}^d.$ Expanding similarly expressions in \eqref{eq:Ito_overlap_generic} and using pair-wise independence of entries, we get:
    \begin{multline*}
dZ_t = 
\sigma^2\left(
\tfrac{1}{2}(1-d)Z_t\left(
\| \Att_\beta(Y^1_t, \bm{Y}_t)\|^2 + \| \Att_\beta(Y^2_t, \bm{Y}_t)\|^2
\right) + (Z_t^2+d-2)\left\langle
    \Att_\beta(Y^1_t, \bm{Y}_t),  \Att_\beta(Y^2_t, \bm{Y}_t)
    \right\rangle
\right)dt+
\\
+\sigma \sum\limits_{k,l=1}^d\left[
Y^{2,k}_t \Att_{\beta}^l(Y^1_t,\bm{Y}_t) + Y^{1,k}_t \Att_{\beta}^l(Y^2_t,\bm{Y}_t) -Z_t\left(Y^{1,k}_t \Att_{\beta}^l(Y^1_t,\bm{Y}_t) + Y^{2,k}_t \Att_{\beta}^l(Y^2_t,\bm{Y}_t)
\right)
\right]dW_t^{kl},
\end{multline*}
where $\Att_{\beta}^k(Y, \bm{Y})\vcentcolon = \langle\Att_{\beta}(Y, \bm{Y}), e_k\rangle$ and $Y_t^{i,k} \vcentcolon= \langle Y_t^{i}, e_k\rangle.$ Normalizing the diffusion component, there exists a scalar Brownian motion $B_t\in\mathbb{R}$ (correlated with the original ones by the quadratic covariation property) such that:
\begin{multline}
\sum\limits_{k,l=1}^d\left[
Y^{2,k}_t \Att_{\beta}^l(Y^1_t,\bm{Y}_t) + Y^{1,k}_t \Att_{\beta}^l(Y^2_t,\bm{Y}_t) -Z_t\left(Y^{1,k}_t \Att_{\beta}^l(Y^1_t,\bm{Y}_t) + Y^{2,k}_t \Att_{\beta}^l(Y^2_t,\bm{Y}_t)
\right)
\right]dW_t^{kl}
\\ = \sqrt{(1-Z_t^2)\left(
    \| \Att_\beta(Y^1_t, \bm{Y}_t)\|^2 + \| \Att_\beta(Y^2_t, \bm{Y}_t)\|^2-2Z_t\left\langle
    \Att_\beta(Y^1_t, \bm{Y}_t),  \Att_\beta(Y^2_t, \bm{Y}_t)
    \right\rangle
\right)}\,dB_t.
\end{multline}
Thus, the final expression for arbitrary two tokens is:
\begin{multline*}
dZ_t = 
\sigma^2\left(
\tfrac{1}{2}(1-d)Z_t\left(
\| \Att_{\beta}(Y^1_t, \bm{Y}_t)\|^2 + \| \Att_{\beta}(Y^2_t, \bm{Y}_t)\|^2
\right) + (Z_t^2+d-2)\left\langle
    \Att_{\beta}(Y^1_t, \bm{Y}_t),  \Att_{\beta}(Y^2_t, \bm{Y}_t)
    \right\rangle
\right)dt+
\\
+\sigma\sqrt{(1-Z_t^2)\left(
    \| \Att_{\beta}(Y^1_t, \bm{Y}_t)\|^2 + \| \Att_{\beta}(Y^2_t, \bm{Y}_t)\|^2-2Z_t\left\langle
    \Att_{\beta}(Y^1_t, \bm{Y}_t),  \Att_{\beta}(Y^2_t, \bm{Y}_t)
    \right\rangle
\right)}\,dB_t.
\end{multline*}

For $N=2$, we substitute the explicit formulas for unnormalized and normalized attention. For unnormalized attention $\Att^{\mathrm{U}}_\beta$, we have
\[
\Att^{\mathrm{U}}_\beta(Y^i,\bm{Y}) = \frac{1}{2} \left(
Y^ie^{\beta} + Y^{3-i}e^{\beta Z_t} 
\right), \quad i=1,2.
\]
For normalized attention $\Att^{\mathrm{S}}_\beta$, we have
\[
\Att^{\mathrm{S}}_\beta(Y^i,\bm{Y}) = \frac{Y^ie^{\beta} + Y^{3-i}e^{\beta Z_t}}{e^{\beta} + e^{\beta Z_t}}, \quad i=1,2.
\]
Substituting these expressions and simplifying algebraically yields exactly \eqref{eq:Z_SDE_U} for case (i) and \eqref{eq:Z_SDE_S} for case (ii), as claimed.
\end{proof}

\subsection{Boundary Behavior of the Overlap Diffusion}
By Lemma~\ref{lem:overlap_two_tokens}, the overlap process $(Z_t)_{t\ge0}$ is a one-dimensional It\^o diffusion on $[-1,1]$ with smooth coefficients on $(-1,1)$ that vanish only at the boundaries $\{-1,1\}$; the diffusion coefficient is strictly positive in the interior. Hence, the long‑time behavior is determined entirely by whether $Z_t$ can reach $\pm 1$ and, once there, remain there.

We employ Feller's boundary classification for one‑dimensional diffusions \citep[Chapter~15, \S6, p.226]{karlin1981second}. Both boundaries $\pm1$ are inaccessible, but their attracting/non-attracting character depends on $(d,\beta)$. The analysis reveals that, for each $(d,\beta)$ and attention mechanism, the boundaries can become attracting. As a result, $Z_t$ converges to either $+1$ or $-1$ with positive probability, with no other possible limits.

\begin{lemma}[Boundary classification for two tokens, full noise]
\label{lem:boundary_classification}
Consider the overlap diffusion \( Z_t \) for \(N=2\) tokens under the full noise model, for either unnormalized self-attention \(\Att_\beta^{\mathrm U}\) (SDE~\eqref{eq:Z_SDE_U}) or normalized self-attention \(\Att_\beta^{\mathrm S}\) (SDE~\eqref{eq:Z_SDE_S}).  
The boundary behavior is identical for both attention mechanisms and is given as follows:
\begin{enumerate}
    \item For dimension \(d=2\), both boundaries \(+1\) and \(-1\) are attracting.
    \item For dimensions \(d\ge 3\):
    \begin{enumerate}
        \item The boundary \(+1\) is always attracting.
        \item The boundary \(-1\) is attracting if and only if \(d-2 < \cosh \left(2\beta\right)\). Otherwise, it is non-attracting.
    \end{enumerate}
\end{enumerate}
\end{lemma}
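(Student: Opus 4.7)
The plan is to apply Feller's scale–function criterion and exploit the fact that, up to a common positive rescaling, the overlap SDE is the same for the two attention mechanisms. I would organize the argument in three steps.

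First I would reduce cases (i) and (ii) to a single computation. For $N=2$ the normalized and unnormalized attentions are proportional, $\Att^{\mathrm{S}}_{\beta} = \tfrac{2}{e^{\beta}+e^{\beta z}}\,\Att^{\mathrm{U}}_{\beta}$, so substituting into the drift/diffusion formulas established in the proof of Lemma~\ref{lem:overlap_two_tokens} shows that $\mu$ and $\sigma^{2}$ in~\eqref{eq:Z_SDE_S} are obtained from those in~\eqref{eq:Z_SDE_U} by multiplication by the \emph{same} positive factor $4/(e^{\beta}+e^{\beta z})^{2}$. Hence the ratio
\[
R(z) \vcentcolon= \frac{2\mu(z)}{\sigma^{2}(z)} = \frac{2(d-2)e^{\beta(1+z)}-z(e^{2\beta}+e^{2\beta z})}{(1-z^{2})(e^{2\beta}+e^{2\beta z})}
\]
and the scale derivative $S'(z) = \exp\bigl(-\int^{z} R(u)\,du\bigr)$ are identical in both cases, which already accounts for the ``same behavior'' claim of the lemma.

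Second I would invoke Feller's criterion \citep{karlin1981second}: a boundary $b \in \{-1,+1\}$ is attracting iff $S'$ is integrable on one side of $b$. I then carry out one-term Taylor expansions of numerator and denominator of $R$ at each boundary. At $z=1$, the numerator tends to $2(d-3)e^{2\beta}$ and the denominator to $4e^{2\beta}(1-z)$ to leading order, so $R(z) = \tfrac{d-3}{2(1-z)} + O(1)$ and therefore $S'(z) \asymp (1-z)^{(d-3)/2}$; since $(d-3)/2 > -1$ for every $d \ge 2$, this is integrable near $1$, and the boundary $+1$ is always attracting. At $z=-1$, the numerator tends to $2(d-2)+2\cosh(2\beta)>0$ while the denominator is $\sim 4\cosh(2\beta)(1+z)$, giving
\[
R(z) = \frac{\alpha}{1+z} + O(1), \qquad \alpha \vcentcolon= \frac{d-2+\cosh(2\beta)}{2\cosh(2\beta)},
\]
hence $S'(z) \asymp (1+z)^{-\alpha}$, integrable iff $\alpha < 1$, i.e.\ iff $d-2 < \cosh(2\beta)$.

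Third I would read off the three cases of the lemma: $+1$ is always attracting; $-1$ is attracting iff $d-2<\cosh(2\beta)$ and non-attracting otherwise; and for $d=2$ the condition $d-2<\cosh(2\beta)$ becomes $0<\cosh(2\beta)$, which always holds, so both boundaries are attracting in dimension two. I expect the main subtlety to be purely one of bookkeeping: one must verify that the $O(1)$ remainder in each expansion of $R$ contributes only a bounded multiplicative factor to $\exp(-\int R)$, so that the Feller integrability criterion really collapses to the inequalities $(d-3)/2>-1$ and $\alpha<1$. The algebraic identity $\Att^{\mathrm{S}}_{\beta}\propto\Att^{\mathrm{U}}_{\beta}$ and the clean cancellations producing the constants $d-3$ and $d-2+\cosh(2\beta)$ are the key computational points, but both follow from direct one-term expansions of the exponentials.
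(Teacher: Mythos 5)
Your proposal is correct and follows essentially the same route as the paper: both apply Feller's scale-function criterion to the one-dimensional overlap diffusion, observe that the ratio $2\mu/\sigma^2$ coincides for the two attention mechanisms, and then classify the boundaries $\pm 1$ by the leading-order power-law singularity of the scale density. Your proportionality observation $\Att^{\mathrm{S}}_{\beta}=\tfrac{2}{e^{\beta}+e^{\beta z}}\,\Att^{\mathrm{U}}_{\beta}$ gives a cleaner explanation for why the scale functions coincide (the paper simply states that the ratio can be checked directly from the two SDEs), and you handle $d=2$ uniformly via the exponents $(d-3)/2$ and $\alpha$ rather than as a separate case as the paper does; these are cosmetic rather than substantive differences.
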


\begin{proof}
Both SDEs can be written in the standard form
\[
dZ_t = \mu(Z_t)dt + \sigma(Z_t)dB_t,
\]
with drift and diffusion coefficients that are smooth on \((-1,1)\) and vanish at the endpoints.  
Crucially, the ratio \(\frac{2\mu(z)}{\sigma^2(z)}\)—which determines the scale function—is identical for the two attention mechanisms.  
Indeed,
\[
\frac{2\mu(z)}{\sigma^2(z)} = 
\frac{2(d-2)e^{\beta(1+z)}-z\bigl(e^{2\beta}+e^{2\beta z}\bigr)}
{(1-z^2)\bigl(e^{2\beta}+e^{2\beta z}\bigr)},
\]
as can be verified directly from~\eqref{eq:Z_SDE_U} and~\eqref{eq:Z_SDE_S}.  
Therefore, the derivative of the scale function (with base point \(x_0\in(-1,1)\)):
\[
\bm{s}(x) = \exp\Bigl[-\int_{x_0}^x \frac{2\mu(y)}{\sigma^2(y)}\,dy\Bigr]
        = \exp\Bigl[-\int_{x_0}^x \frac{2(d-2)e^{\beta(1+y)}-y\bigl(e^{2\beta}+e^{2\beta y}\bigr)}
                                      {(1-y^2)\bigl(e^{2\beta}+e^{2\beta y}\bigr)}\,dy\Bigr],
\]
is the same for both dynamics.

Define \(\psi(y) \coloneqq \dfrac{2(d-2)e^{\beta(1+y)}}{e^{2\beta}+e^{2\beta y}}\).  
Then
\[
\bm{s}(x)=\begin{cases}
\displaystyle \sqrt{\frac{1-x_0^2}{1-x^2}}, & d=2,\\[10pt]
\displaystyle \sqrt{\frac{1-x_0^2}{1-x^2}}\,
\exp\!\Bigl[-\int_{x_0}^x\frac{\psi(y)}{1-y^2}\,dy\Bigr], & d\ge3.
\end{cases}
\]

For \(d\ge3\), near the right boundary we have \(\psi(1)=d-2\) and
\[
\int_{x_0}^{1-\varepsilon}\frac{\psi(y)}{1-y}\,dy = \frac{d-2}{2}\,\bigl|\log\varepsilon\bigr| + O(1),\qquad \varepsilon\to0^+.
\]
Hence \(\bm{s}(1-\varepsilon) \asymp \varepsilon^{-\frac12+\frac{d-2}{2}}\).  
The integral \(\int_{x_0}^{1}\bm{s}(x)dx\) diverges for all \(d\ge2\), so the boundary \(+1\) is always attracting.

Near the left boundary, \(\psi(-1)=\dfrac{d-2}{\cosh(2\beta)}\) and
\[
\int_{-1+\varepsilon}^{x_0}\frac{\psi(y)}{1+y}\,dy = \frac{d-2}{2\cosh(2\beta)}\,\bigl|\log\varepsilon\bigr| + O(1),\qquad \varepsilon\to0^+.
\]
Thus \(\bm{s}(-1+\varepsilon) \asymp \varepsilon^{-\frac12-\frac{d-2}{2\cosh(2\beta)}}\).  
The boundary \(-1\) is attracting precisely when \(\int_{-1}^{x_0}\bm{s}(x)dx<\infty\), which occurs iff \(-\frac12-\frac{d-2}{2\cosh(2\beta)} > -1\), i.e. \(d-2 < \cosh(2\beta)\).  

For \(d=2\), the explicit scale function yields \(\bm{s}(1-\varepsilon)\asymp\varepsilon^{-1/2}\) and \(\bm{s}(-1+\varepsilon)\asymp\varepsilon^{-1/2}\), so both boundaries are attracting.
\end{proof}

\begin{lemma}[Attainability]\label{lemma:unattainability}
    For the diffusion processes defined in \eqref{eq:Z_SDE_U}, \eqref{eq:Z_SDE_S}, \eqref{eq:overlap_sde_for_hybrid}, with $Z_0\in(-1, 1)$, we have the following properties:
    \begin{enumerate}
        \item $|Z_t|\le1$ a.s. for all $t\geq 0$;
        \item Both boundaries $\pm 1$ are inaccessible; that is, $|Z_t|<1$ for any finite $t$.
    \end{enumerate}
\end{lemma}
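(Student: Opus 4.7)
My plan is to prove the two claims in turn, leveraging the one-dimensional diffusion framework of Lemma~\ref{lem:overlap_two_tokens} and the boundary-classification asymptotics developed in the proof of Lemma~\ref{lem:boundary_classification}. For claim~(1), both SDEs \eqref{eq:Z_SDE_U} and \eqref{eq:Z_SDE_S} arise from It\^o's formula applied to $Z_t=\langle Y^1_t,Y^2_t\rangle$, and Lemma~\ref{lemma:sphere_supported} ensures that $Y^1_t,Y^2_t\in\mathbb{S}^{d-1}$ almost surely; Cauchy--Schwarz then gives $|Z_t|\le 1$ pathwise. For the hybrid overlap SDE \eqref{eq:overlap_sde_for_hybrid} the same conclusion follows from the analogous sphere preservation of \eqref{eq:hybrid_continuous}, which is checked directly by applying It\^o's formula to $\|Y^i_t\|^2$. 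Alternatively, one can observe that the drift and diffusion coefficients of all three SDEs vanish at $\pm1$, so strong uniqueness on each compact subinterval of $(-1,1)$ forces a solution started in $(-1,1)$ to remain in $[-1,1]$.

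For claim~(2), I would apply Feller's test for inaccessibility: a boundary $r$ is reached in finite time with positive probability if and only if
\[
\Sigma(r)\;:=\;\int_{x_0}^{r}\bigl|S(r)-S(y)\bigr|\,m(y)\,dy\;<\;\infty,
\]
where $S$ is the scale function and $m=1/(\bm{s}\sigma^{2})$ is the speed density, both essentially computed in the proof of Lemma~\ref{lem:boundary_classification}. Near $z=1$ the asymptotics obtained there give $\bm{s}(y)\asymp(1-y)^{(d-3)/2}$ and $\sigma^{2}(y)\asymp(1-y)^{2}$, so $S(1)-S(y)\asymp(1-y)^{(d-1)/2}$ and $m(y)\asymp(1-y)^{-(d+1)/2}$; the product is $\asymp(1-y)^{-1}$ and its integral diverges, yielding $\Sigma(1)=\infty$. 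Near $z=-1$ the relevant scale exponent is $\alpha=-\tfrac12-\tfrac{d-2}{2\cosh(2\beta)}$: if $d-2\ge\cosh(2\beta)$ then $|S(-1)|=\infty$ and $\Sigma(-1)=\infty$ trivially, whereas in the remaining regime the same scale--speed cancellation yields $(S(y)-S(-1))\,m(y)\asymp(1+y)^{-1}$ and $\Sigma(-1)=\infty$ again.

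The main difficulty will be the case $d-2<\cosh(2\beta)$ at the left boundary, which is the \emph{entrance-type} regime of Feller's classification: $-1$ is attracting by Lemma~\ref{lem:boundary_classification}, so one might naively expect it to be attainable, yet the singularity of the speed measure reverses this intuition. The argument hinges on the precise exponent cancellation $(\alpha+1)+(-\alpha-2)=-1$, producing only a logarithmic divergence; any asymmetry between the scale and diffusion asymptotics would alter the conclusion. For the hybrid overlap SDE \eqref{eq:overlap_sde_for_hybrid} the structural form of the coefficients (drift and diffusion both vanishing linearly in the distance to the boundary) is preserved, so the same Feller asymptotics transfer verbatim after identifying the boundary exponents of its scale and speed densities.
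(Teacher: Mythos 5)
Your proposal is correct but takes a genuinely different route from the paper for the inaccessibility claim. For item~(1) both of you use the Cauchy--Schwarz/sphere-preservation argument, so there is no difference there. For item~(2), the paper in fact explicitly mentions that one could compute Feller's integral $\Sigma$ from \citet{karlin1981second}---which is exactly what you do---but then deliberately avoids this and instead applies the change of variables $f(x)=\tanh^{-1}(x)$. Because all three overlap SDEs share the factorized form $dZ_t=(1-Z_t^2)\bigl(c(Z_t)\,dt+\gamma(Z_t)\,dB_t\bigr)$ with $c,\gamma$ smooth and bounded on $[-1,1]$, It\^o's formula applied to $Y_t=\tanh^{-1}(Z_t)$ yields an SDE on all of $\mathbb{R}$ with bounded Lipschitz coefficients, so $Y_t$ never explodes and hence $Z_t=\tanh(Y_t)$ never touches $\pm1$. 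This is a one-line non-explosion argument that handles all parameter regimes and all three SDEs uniformly, with no case analysis. Your $\Sigma$-test route also works: the asymptotics you extract from the scale-function computation in Lemma~\ref{lem:boundary_classification} give the exponent cancellation $(\alpha+1)+(-\alpha-2)=-1$, so the integrand behaves like $(1\mp y)^{-1}$ near each boundary and $\Sigma=\infty$. The trade-off is that your argument requires tracking the boundary exponents of the scale and speed densities separately for each SDE and each of $\pm1$ (including the branch where $|S(\mp1)|=\infty$, which still needs the standard observation that a non-attracting boundary is automatically inaccessible), whereas the paper's substitution sidesteps Feller's machinery entirely and is robust to the precise values of the exponents. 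One small terminological slip: you describe the attracting-but-inaccessible regime at $-1$ as ``entrance-type,'' but an attracting inaccessible boundary is a natural boundary in Feller's taxonomy; entrance boundaries are the non-attracting ones. This does not affect the substance, since the only quantity you actually use is $\Sigma$.
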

\begin{proof}
    The first item follows directly from the definition $Z_t=\langle X_t^1, X_t^2\rangle$ for two unit vectors $X_t^1, X_t^2$ and the Cauchy--Schwarz inequality. For the second item, we could verify it directly by computing the corresponding integral $\Sigma$ in \citet{karlin1981second}, but instead we consider the function $f(x)=\tanh^{-1}(x)$, defined as
    \[
    f(x) \vcentcolon= \tfrac{1}{2}\ln\left(\frac{1+x}{1-x}\right).
    \]
    For the generic dynamics
    \[
    dZ_t = (1-Z_t^2)(c(Z_t)\,dt+ \gamma(Z_t)\,dB_t),
    \]
    which correspond to \eqref{eq:Z_SDE_U}, \eqref{eq:Z_SDE_S}, \eqref{eq:overlap_sde_for_hybrid}, where $c$ and $\gamma$ are smooth and bounded on $[-1, 1]$, we consider $Y_t=f(Z_t)$. Applying Itô's formula, we obtain:
    \[
    dY_t = \left(c\circ\tanh(Y_t)+ Z_t\cdot\left(\gamma\circ\tanh(Y_t)\right)^2\right)\,dt + \gamma\circ\tanh(Y_t)\,dB_t, \quad Y_0 = f(Z_0).
    \]
    By Itô's Uniqueness and Existence Theorem, $Y_t$ is well-defined and does not explode to $\pm \infty$ in finite time. Thus, $Z_t=\tanh(Y_t)$ never reaches $\pm 1$.
\end{proof}

\subsection{Proof of the Phase Transition Result (Theorem \ref{th:phase_transition}) }\label{app:proof_of_phase_trans}
With the boundary classification of the overlap diffusion established in Lemmas~\ref{lem:boundary_classification},\ref{lemma:unattainability}, we are now equipped to prove the main result of this section. Before doing so, we introduce some additional notation. Based on the function $\bm{s}(x),$ we define a \textit{scale function}:
\[
\bm{S}(x) = \int\limits_{x_0}^{x} \bm{s}(y)\,dy.
\]
The main property is that directly applying Itô's formula, we get that $\bm{S}(Z_t)$ is a local martingale. Thanks to the classification of the boundary points, we know precisely the range of parameters $(\beta,d)$ for which $\bm{S}(\pm 1),$ defined as the corresponding limit, is finite, and vice versa. Note that this behavior does not depend on the choice of the initial point $x_{0}\in (-1,1).$ We then define a convenient set of stopping times:
\[
\tau_a \vcentcolon= \inf\left\{t\geqslant 0 \mid Z_t = a\right\} \text{ for } a \in(-1, 1).
\]
Although Theorem~\ref{th:phase_transition} follows directly from the boundary classification above, the limiting behavior itself requires a proof. While this result is folklore and appears in various forms across different textbooks, we have struggled to find it stated in exactly the form we need. For the sake of clarity, we therefore include a proof based on unpublished lecture notes by \citet{jacobsen2008onedimensional}.

\begin{proof}[Proof of Theorem~\ref{th:phase_transition}] For the case when two initial token positions do not coincide, we have $Z_0=z_0\in(-1, 1).$ We restrict ourselves to the case when $z_0$ is non-random, but the results remain the same for random initial distributions by convolution. We note that both boundaries $\pm 1$ are unattainable in finite time by Lemma~\ref{lemma:unattainability}.

The proof relies on the properties of the function $\bm{S}(x)$ defined above. If $\bm{S}(1) < \infty,$ then
\begin{equation}\label{eq:statements}
\lim_{t \to \infty} Z_t = 1 \quad \mathbb{P}^{z_0}\text{-almost surely on } A_-,
\quad \text{where }
A_- = \bigcup_{a:\, a < {z_0}} (\tau_a = \infty),
\qquad
\mathbb{P}^{z_0}(A_-) = \frac{\bm{S}({z_0}) - \bm{S}(-1)}{\bm{S}(1) - \bm{S}(-1)}.
\end{equation}

Moreover, if $\bm{S}(-1) > -\infty$, then
\[
\lim_{t \to \infty} Z_t = -1 \quad \mathbb{P}^{z_0}\text{-almost surely on } A_+,
\quad \text{where }
A_+ = \bigcup_{b:\, b > {z_0}} (\tau_b = \infty),
\qquad
\mathbb{P}^{z_0}(A_+) = \frac{\bm{S}(1) - \bm{S}({z_0})}{\bm{S}(1) - \bm{S}(-1)}.
\]
Indeed, let's show that for $\bm{S}(1)< \infty,$ we have \eqref{eq:statements}. Clearly, for $a<z_0$ $\{\tau_a=\infty\}\uparrow A_{-}$ as $a\to -1,$ then
\[
\mathbb{P}^{z_0}(A_{-}) = \lim\limits_{a\downarrow -1} \mathbb{P}^{z_0}(\tau_a=\infty).
\]
Note that $\bm{S}(Z_{t\land \tau_a})\vcentcolon=\bm{S}(Z_{\min\{t,\tau_a\}})$ is a bounded local martingale, hence a true martingale. Boundedness follows from the fact that $\bm{S}(Z_{t\land \tau_a})\in[a, \bm{S}(1)]$ and we assumed that $\bm{S}(1)<\infty.$ Then, by the martingale convergence theorem, we can well define $\bm{S}(Z_{\tau_a})$ as follows:
\[
\bm{S}(Z_{\tau_a})\vcentcolon = 
\begin{cases}
    \bm{S}(a)\text{ on } (\tau_a<\infty),
    \\
    \lim\limits_{t\to\infty}\bm{S}(Z_t)\text{ on }\tau_a=\infty.
\end{cases}
\]
So, we have from one side 
\begin{equation}\label{eq:cond_exp}
\mathbb{E}^{z_0} \left[\bm{S}(Z_{\tau_a})\right]= \bm{S}(z_0),
\end{equation}
from another, by inaccessibility of $1$ \[
\mathbb{P}^{z_0}\left[\tau_a=\infty\right] = \lim_{\substack{z_0 < b \\ b \to 1}} \mathbb{P}^{z_0}\left[\tau_a>\tau_b\right] = 
    \lim_{\substack{z_0 < b \\ b \to 1}} \frac{\bm{S}(z_0) - \bm{S}(a)}{\bm{S}(b) - \bm{S}(a)} = \frac{\bm{S}(z_0) - \bm{S}(a)}{\bm{S}(1) - \bm{S}(a)},
\]
Substituting it to \eqref{eq:cond_exp}, we get \[
\bm{S}(z_0) = \mathbb{E}^{z_0} \left[\bm{S}(Z_{\tau_a})\right] = \bm{S}(a) \cdot\left(1 - \mathbb{P}^{z_0}\left[\tau_a=\infty\right]\right) + \mathbb{E}^{z_0} \left[\bm{S}(Z_{\tau_a})\operatorname{Ind}\left\{ \tau_a = \infty\right\}
\right],
\]
and thus simplifying we get that 
\[
\mathbb{E}^{z_0}\left[\bm{S}(Z_{\tau_a})\operatorname{Ind}\left\{ \tau_a = \infty\right\}
\right] = \bm{S}(1)\mathbb{P}(\tau_a=\infty).
\]
Since $\bm{S}(Z_{\tau_a})\le \bm{S}(1),$ we have that ${\bm{S}(Z_{\tau_a})}= \bm{S}(1)$ $\mathbb{P}^{z_0}$ almost surely on $\{\tau_a=\infty\}.$ Then, \eqref{eq:statements} is proved. From this and the boundary classification in Lemma~\ref{lem:boundary_classification}, we get the statement of the theorem.
\end{proof}

\subsection{Hybrid Model Boundary Behavior}
We now turn to the hybrid clustering regime.
\begin{proof}[Proof of Lemma~\ref{lem:hybrid_clustering}]
The overlap equation similar to \eqref{eq:Z_SDE_S} and \eqref{eq:Z_SDE_U} is given as follows:
\begin{equation}\label{eq:overlap_sde_for_hybrid}
    dZ_t =  e^{\beta Z_t} (Z_t^2 - 1)\left[
   \left(
     \tfrac{\varepsilon^2 e^{\beta}}{2} + Z_t \varepsilon^2 e^{\beta Z_t}-1
    \right)dt - \varepsilon dB_t
    \right].
    \end{equation}
The reasoning about convergence is exactly the same as in the proof of Theorem~\ref{th:phase_transition}, since Lemma~\ref{lemma:unattainability} applies to both. Here we only perform the boundary classification.

    In this case, for $\bm{s}(x)$ we obtain the following expression:
    \[
    \bm{s}(x) = 
    \exp\left(
        -2\int\limits_{x_0}^{x} \frac{
            \tfrac{\varepsilon^2 e^{\beta(1-y)}}{2} + y\varepsilon^2  - e^{-\beta y}
        }{ \varepsilon^2(y^2-1)}\,dy
    \right) = \exp\left(\,
    \int\limits_{x_0}^{x}
    \frac{( e^{\beta}-\tfrac{2}{\varepsilon^2})e^{-\beta y} + y }{1-y^2}\,dy
    \right).
    \]
    Simplifying the integral, we get:
    \[
    \bm{s}(x) = \frac{1-x_0^2}{1-x^2} \exp\left(
        (e^{\beta}-\tfrac{2}{\varepsilon^2})\int\limits_{x_0}^x \frac{e^{-\beta y}}{1-y^2}\,dy
    \right).
    \]
    Hence, 
    \[
    \bm{s}(1-\delta) \sim \frac{1}{\delta} \cdot \exp\left(-\tfrac{1}{2}(e^{\beta} - \tfrac{2}{\varepsilon^2})e^{-\beta}\log (\delta)
    \right),\quad \bm{s}(-1+\delta) \sim \frac{1}{\delta}\cdot \exp\left(
        \tfrac{1}{2}(e^{\beta} - \tfrac{2}{\varepsilon^2})e^{\beta} \log(\delta)
    \right),\text{ as }\delta\to0.
    \]
    Simplifying, we get:
    \[
    \bm{s}(1-\delta) \sim \delta^{-\tfrac{3}{2} + \tfrac{e^{-\beta}}{\varepsilon^2}},\quad \bm{s}(-1+\delta) \sim \delta^{-1+\tfrac{e^{2\beta}}{2}-\tfrac{e^{\beta}}{\varepsilon^2}}.
    \]
    From the standard integrability conditions, we have
    \begin{enumerate}
\item \fbox{
$\tfrac{1}{\varepsilon^2}>\tfrac{e^{\beta}}{2}$} — the boundary $s=1$ is attracting, while $s=-1$ is non-attracting.
\item \fbox{
$\tfrac{1}{\varepsilon^2}<\tfrac{e^{\beta}}{2}$} — the boundary $s=1$ is non-attracting, while $s=-1$ is attracting.
\end{enumerate}
This completes the proof.
\end{proof}

\section{Additional Experimental Results}\label{app:numerics}

This section provides additional numerical experiments demonstrating the convergence dynamics of token clustering. We validate that the chosen time horizons are sufficiently large for clustering behavior to stabilize, and demonstrate the robustness of our theoretical findings across different parameter regimes.

\subsection{Phase Transition Analysis Across Parameter Regimes}\label{subsec:phase_trans_stability}

To verify that our choice of time horizon in the main paper (Figure~\ref{fig:phase_trans2d}) is conservative, we present convergence curves at shorter time horizons ($T=50$ and $T=100$ versus $T=500$ on \cref{fig:phase_trans2d}). Specifically, we track two key quantities:
\begin{enumerate}
    \item The proportion of tokens that fail to cluster into any configuration
    \item The proportion of observed clustering configurations, which stabilizes well below the maximum time horizon
\end{enumerate}

We examine three representative values of the inverse temperature parameter $\beta$: the  regimes ($\beta \to 0$ and $\beta \to \infty$) where behavioral differences are most pronounced, and an intermediate regime. These regimes are chosen to demonstrate that qualitative features persist across the parameter space. 
\emph{All other experimental settings are identical to those used in the main phase diagram (Figure~\ref{fig:phase_trans2d}), with the sole difference being the reduced time horizon. Reported curves are obtained by averaging over $2\cdot10^3$ independent trajectories.}

We focus on three representative regimes of the inverse temperature parameter $\beta$, corresponding to qualitatively distinct clustering behaviors.
In the first regime, two-token clustering is possible for all dimensions.
In the second regime, such clustering occurs only for a subset of dimensions.
In the third regime, two-token clustering again becomes possible uniformly across all dimensions.
The selected values of $\beta$ are chosen to illustrate these three regimes.

\begin{figure}[htb]
    \centering
    \begin{subfigure}{0.48\textwidth}
        \centering
        \includegraphics[width=\linewidth]{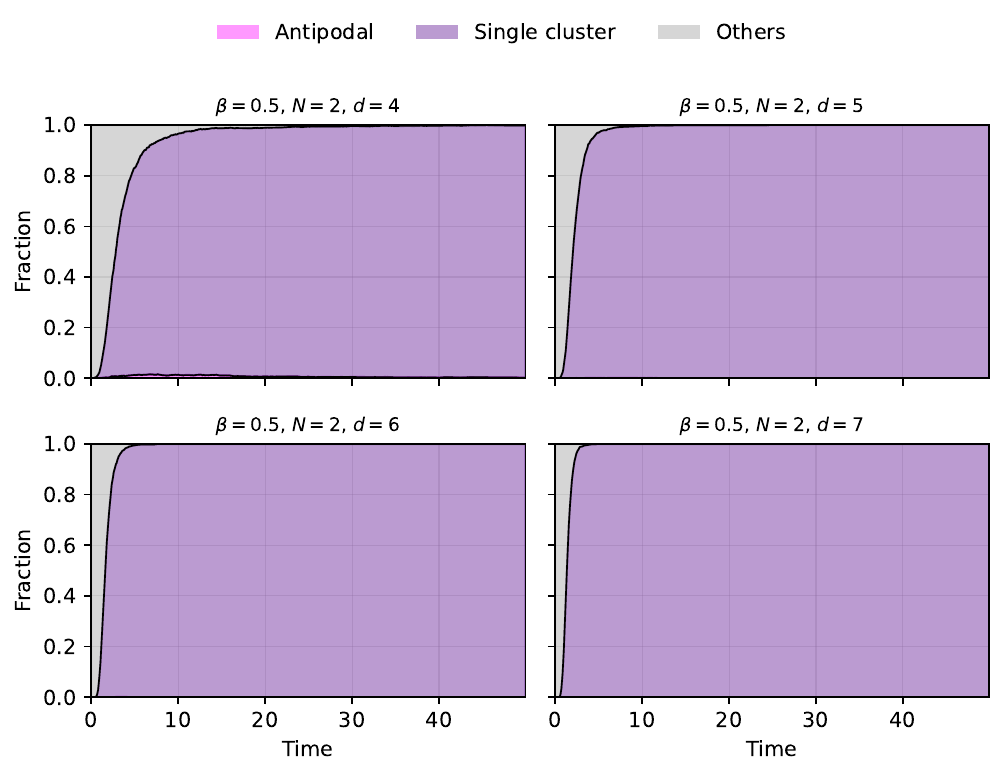}
        \caption{Below-threshold regime ($\beta < 1$): convergence toward antipodal clustering does not happen for all dimensions.}
        \label{fig:pan_small_beta}
    \end{subfigure}
    \hfill
    \begin{subfigure}{0.48\textwidth}
        \centering
        \includegraphics[width=\linewidth]{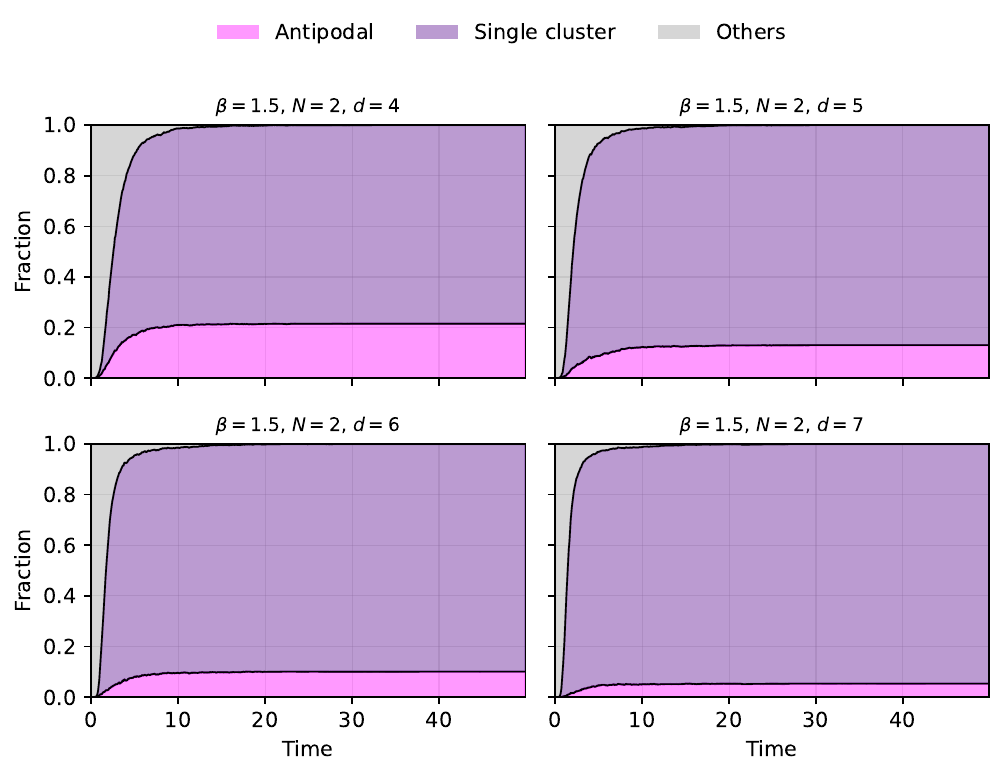}
        \caption{Above-threshold regime ($\beta > 1$): convergence toward antipodal clustering happens for all dimensions.}
        \label{fig:pan_large_beta}
    \end{subfigure}

    \vspace{0.6em}

    \begin{subfigure}{0.48\textwidth}
        \centering
        \includegraphics[width=\linewidth]{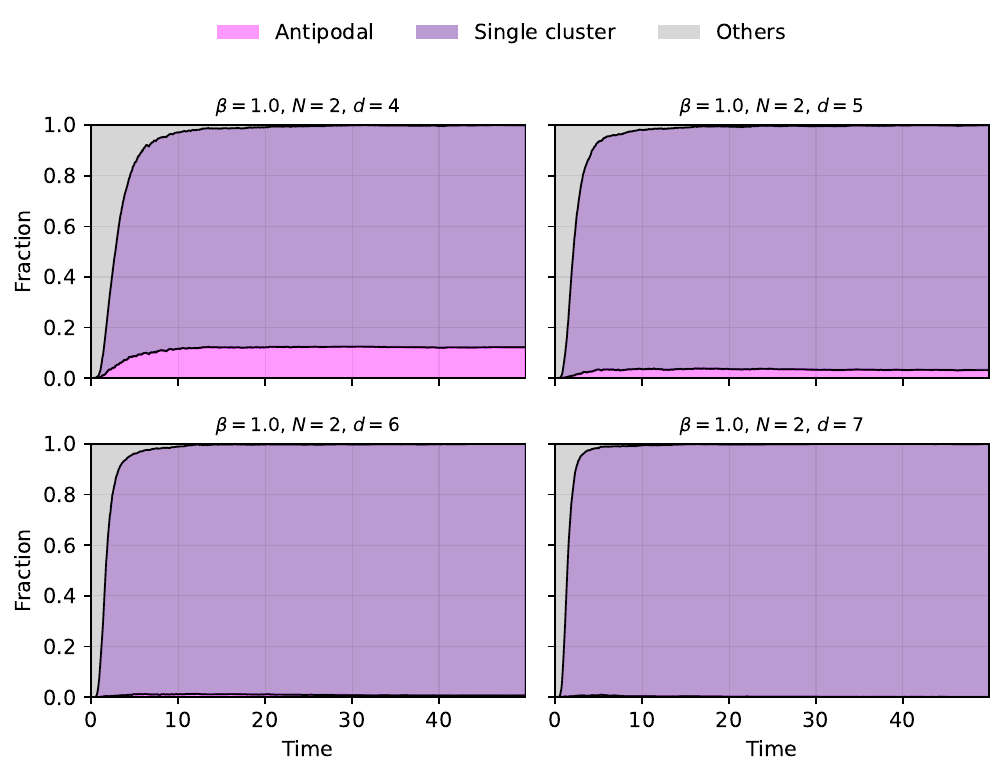}
        \caption{Near-threshold regime ($\beta = 1$). Convergence to the antipodal configuration happens only for $d<6$.}
        \label{fig:pan_mid_beta}
    \end{subfigure}

    \caption{Convergence behavior across representative values of the inverse temperature $\beta$.
Panels~(a)--(c) correspond respectively to regimes in which all trajectories converge to a single-cluster configuration, in which convergence to an antipodal configuration is attainable for all dimensions considered, and in which the antipodal configuration is attainable only for certain dimensions while remaining inaccessible for others.}
    \label{fig:beta_regimes_panels}
\end{figure}

\subsection{Rate of Convergence and Metastable behavior}\label{subsec:metastability}

We provide additional numerical illustrations supporting the qualitative observations on convergence rates and metastable behavior discussed in the main text.
In particular, we focus on two representative regimes of the inverse temperature parameter $\beta$, corresponding to smaller and larger values.

\paragraph{Small-$\beta$ regime (comparable convergence rates).}
In this regime, both the stochastic and deterministic dynamics converge predominantly to a single-cluster configuration.
Numerically, the observed convergence rates of the two dynamics are comparable.
The deterministic dynamics is given by the specialization of~\eqref{eq:hybrid_discrete} with $\varepsilon=0$, $\Att_{\beta}=\Att_{\beta}^{\mathrm{S}}$, and $Q^TK=\operatorname{Id}(4)$.
See \cref{fig:small_beta_regime}.

\paragraph{Large-$\beta$ regime (absence of metastability in the stochastic dynamics).}
For larger values of $\beta$, both dynamics still converge to structured clustered configurations; however, their qualitative convergence behavior differs markedly.
The deterministic dynamics exhibits metastable behavior, characterized by long-lived transient configurations before convergence.
In contrast, no such metastability is observed numerically in the stochastic dynamics, which converges significantly faster. See \cref{fig:large_beta_regime}.

Overall, across the regimes considered here, the stochastic dynamics does not display metastable behavior in the sense of~\citet{geshkovski2024dynamic}, in contrast to its deterministic counterpart at large values of $\beta$.

\begin{figure}[htb]
    \centering
    \begin{subfigure}{0.48\linewidth}
        \centering
        \includegraphics[width=\linewidth]{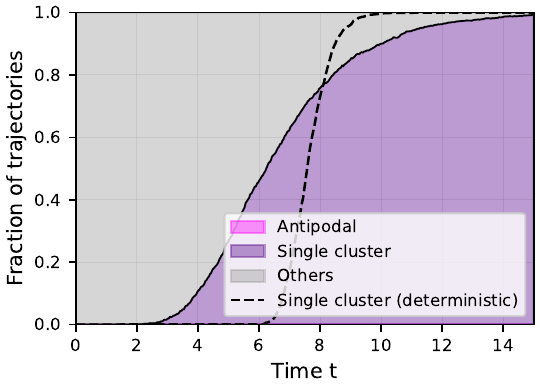}
        \caption{Small-$\beta$ regime.}
        \label{fig:small_beta_regime}
    \end{subfigure}
    \hfill
    \begin{subfigure}{0.48\linewidth}
        \centering
        \includegraphics[width=\linewidth]{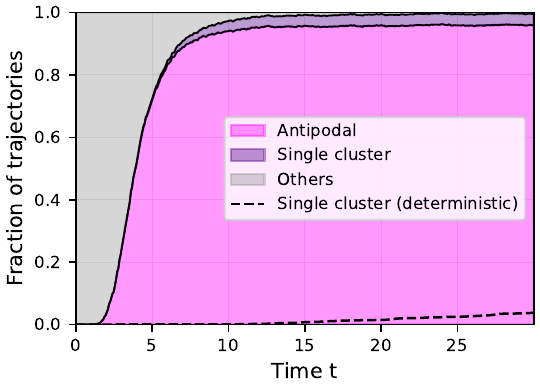}
        \caption{Large-$\beta$ regime.}
        \label{fig:large_beta_regime}
    \end{subfigure}

    \caption{
    Comparison of convergence behavior in two representative regimes.
    \textbf{Left:} $\beta = 0.5$, number of tokens $N=50$, time horizon $T=15$, with standard discretization ($L=100$).
    \textbf{Right:} $N=50$, $\beta = 4.5$, time horizon $T=30$, with the same discretization.
    In both cases, an antipodal loop configuration is detected when all pairwise inner products satisfy $|\langle x_i, x_j \rangle| \ge 1-\varepsilon$ and at least one pair satisfies $\langle x_i, x_j \rangle \le -1+\varepsilon$, with $\varepsilon = 10^{-3}$.
    }
    \label{fig:convergence_regimes_comparison}
\end{figure}

\end{document}

